\documentclass{article}

\author{
   Satyaki Mukherjee*\\
  \texttt{Technical University of Munich}
  \and
  Soumendu Sundar Mukherjee*\\
  \texttt{Indian Statistical Insititute, Kolkata}
 \and
  Debarghya Ghoshdastidar\\
  \texttt{Technical University of Munich}  
}
\title{Wasserstein Projection Pursuit of Non-Gaussian Signals}
\usepackage{times}

\usepackage[utf8]{inputenc}
\usepackage{rose}

\newcommand{\dist}{\varrho}
\newtheorem*{corollary*}{Corollary}
\newtheorem*{lemma*}{Lemma}
\newtheorem{assumption}{Assumption}
\let\hat\widehat
\let\tilde\widetilde

\begin{document}

\maketitle

\begin{abstract}
    We consider the general dimensionality reduction problem of locating in a high-dimensional data cloud, a $k$-dimensional non-Gaussian subspace of interesting features. We use a projection pursuit approach---we search for mutually orthogonal unit directions which maximise the 2-Wasserstein distance of the empirical distribution of data-projections along these directions from a standard Gaussian. Under a generative model, where there is a underlying (unknown) low-dimensional non-Gaussian subspace, we prove rigorous statistical guarantees on the accuracy of approximating this unknown subspace by the directions found by our projection pursuit approach. Our results operate in the regime where the data dimensionality is comparable to the sample size, and thus supplement the recent literature on the non-feasibility of locating interesting directions via projection pursuit in the complementary regime where the data dimensionality is much larger than the sample size.  
\end{abstract}

\section{Introduction}
\def\thefootnote{*}\footnotetext{Equal contribution.}\def\thefootnote{\arabic{footnote}}
A central question in statistics and machine learning concerns the recovery of useful or interesting features from data. A huge body of literature exists that focuses on such feature extraction tasks. Often the statistician encounters high-dimensional data of which only a relatively low-dimensional subspace is of interest. A family of algorithms, often described by the umbrella term \emph{projection pursuit} \cite{friedman1974projection, huber1985projection}, are particularly well-suited for such tasks. By restricting attention to low-dimensional subspaces, projection pursuit allows the statistician to evade the so-called ``curse-of-dimensionality'', which plagues most classical learning algorithms in high-dimensional settings. Furthermore, projection pursuit helps the statistician to discard noisy and information-poor features. Some prominent members of this family of techniques include Principal Component Analysis (PCA), Independent Component Analysis (ICA), matching pursuit, etc.

Perhaps the simplest projection pursuit algorithm is PCA (see, e.g., \cite{jolliffe2002principal},\cite{jolliffe2016principal}), which  considers the subspace generated by the top $k$ eigenvectors of the sample covariance matrix. In effect, PCA tries to find linear combinations of the original features which explain the most variability. While very useful in its own regard, PCA is limited by the fact that it only considers variances. Thus it works very well when the superfluous features have considerably lower variance than the signal, e.g., in noise reduction problems. 
On the other hand, consider a setup where the interesting components are non-Gaussian, while the rest are Gaussian of comparable variability. The Gaussian components cannot be treated as noise (in the sense of having smaller variance than the signal component) but are simply superfluous or ``uninteresting''. PCA has difficulty separating the interesting non-Gaussian components in such scenarios.

To overcome this limitation, various methods conceptually similar to PCA have been proposed. A broad class of such methods goes by the name of ICA(\cite{lee1998independent}). Broadly speaking, there are two families of ICA algorithms. One class of algorithms focuses more on ensuring that the signal directions are statistically independent, thus they minimize mutual information. The other focuses on finding directions in which the data is the ``least Gaussian'' (i.e. most interesting). In this paper, we are also interested in the latter objective.

Many approaches are possible for finding non-Gaussian directions, depending upon our definition of ``non-Gaussian''. A natural way to quantify non-Gaussianity would be to measure the deviation of some aspect of a probability measure of interest from that of a standard Gaussian. For instance, one could use measures such as kurtosis(\cite{girolami1996negentropy}) or negentropy (\cite{cao2003comparison},\cite{novey2008complex}). Alternatively, one could measure the deviation from a Gaussian using suitable probability metrics such as the Kolmogorov-Smirnov (KS) distance, the Wasserstein distance, etc.

We now state the general projection pursuit approach in the context of the problem of identifying non-Gaussian components with the following simple example. Consider a $p$-dimensional random vector $X$ which satisfies the following: there is a unknown direction $u_*$ such that $u_*^\top X$ is non-Gaussian, while $(I - u_*u_*^\top) X$, the distribution of $X$ in the orthogonal complement of $u_*$, is $(p - 1)$-dimensional standard Gaussian, and, further, the non-Gaussian component $u_*^\top X$ and the Gaussian part $(I - u_*u_*^\top)X$ are statistically independent.  Suppose $\dist(\nu, \nu')$ is some measure of quantifying the distance between two probability measures $\nu$ and $\nu'$. $\dist$ could be a divergence between probability measures (e.g., the Kullback-Liebler divergence) or a proper metric (e.g., KS distance).
Suppose we observe a sample $X_1, \ldots, X_n$ from $\nu$, the distribution of $X$. Our goal is to recover the unknown direction $u_*$. The main idea of projection pursuit is then to find a unit direction $\hat{u}$ such that the empirical distribution of the data projected on $\hat{u}$ (i.e. $\frac{1}{n}\sum_{i = 1}^n \delta_{\hat{u}^\top X_i}$) is the farthest from the standard Gaussian distribution with respect to $\dist$, i.e. 
\[
    \hat{u} = \argmax_{u \,:\, \norm{u} = 1} \dist\bigg(\frac{1}{n}\sum_{i = 1}^n \delta_{u^\top X_i}, \nu_g\bigg),
\]
where $\nu_g$ is the standard Gaussian measure. We will later formalise a version of this for general $k$. Our paper is interested in the case when $\dist$ is the 2-Wasserstein distance between probability measures with finite second moment. Specifically, we analyse the question of whether the recovered directions can be guaranteed (with high probability) to be from the signal space or not.

Some work in this regard has been done in \cite{bickel2018projection} and \cite{montanari2022overparametrized}. \cite{bickel2018projection} use the KS distance for $\dist$. They show that if the data is purely Gaussian (i.e. in a null model with no interesting directions), then two completely different phenomena occur according as whether the data-dimensionality-to-sample-size ratio $\frac{p}{n}$ goes to zero or infinity. In the former regime, all projections are Gaussians (in fact, this is known from the earlier work of \cite{diaconis1984asymptotics}). On the other hand, in the latter regime, given any arbitrary probability distribution $\tilde{\nu}$, with high probability, one can find a (data-dependent) direction along which the data set is distributed as $\tilde{\nu}$. In other words, one can find directions along which the data is as far from Gaussianity (in the KS metric) as one desires. This means that projection-pursuit can spot fake signal amidst complete noise. When applied to actual data in this regime, there is no way of  knowing if the found direction came from some underlying signal space, or if it is a mirage of signal in a Gaussian desert!

\cite{montanari2022overparametrized} prove a similar result for the 2-Wasserstein distance. Both the above papers argue that under the null model of $N(0,\bbI_p)$, when $p/n$ converges to a sufficiently small constant, the empirical distributions of projections of the data points in every direction are close to the standard Gaussian distribution. This obviously begs the question if, under a spiked alternative, one could find directions along which the data projections are non-Gaussian. \cite[Theorem 4.6]{montanari2022overparametrized} study the question of obtaining such a signal direction under a specific model of supervised learning. 

\textbf{Our contributions.} Our analysis is done in the context of an alternative model of unsupervised learning. We suppose that the sample comes from a spiked Gaussian model, i.e. there is a $k$-dimensional subspace in which the distribution is decidedly not Gaussian. We first show that under subgaussian tail assumptions, in every direction, the empirical distribution formed by the data-projections and the true marginal distribution in the same direction are uniformly close. This result is a substantial extension of similar results in \cite{bickel2018projection} and \cite{montanari2022overparametrized} to a more general setting. Further, we also show, using a peculiar property of the $2$-Wasserstein distance, that one can recover an orthonormal set of vectors which form an approximate basis of the signal space. (This can be thought of as an instance of the general strategy of matching pursuit.) In particular, each recovered vector's component in the independent Gaussian space is inversely proportional to the signal-to-noise ratio. Finally, if the signal-to-noise ratio is sufficiently large, then we give a methodology to accurately estimate $k$, i.e. the dimension of the signal space. This allows one to use our sequential procedure even in cases where very little is known about the signal space.

\section{Set-up}
Given a probability measure $\mu$ in $\bbR^p$, and a vector $v \in \bbR^p$, we define the action of $v$ on $\mu$, $v\sharp \mu$ to be the marginal density of $\mu$ in the direction $v$. In particular,
\begin{definition}
    If $X$ is a random variable in $\bbR^p$ from the measure $\mu$ and $v^\top X$ is the dot product of $v$ and $X$, then $v\sharp \mu$ is defined to be the density function of the real-valued random variable $v^\top X$.
\end{definition}

We also need the notion of the $2$-Wasserstein distance $d_{W_2}(\mu, \nu)$ between two probability measures $\mu$ and $\nu$, defined via
\[
    d_{W_2}^2(\mu, \nu) \coloneqq \inf_{\substack{\text{all couplings $\pi$ of } X, Y \\ X \sim \mu, Y \sim \nu}} \bbE_\pi [X - Y]^2.
\]

We now introduce a formal set-up for the non-Gaussian component recovery problem which we will analyse. 

\begin{assumption}\label{ass:distr}
Suppose that we have data $X_1, \ldots, X_n$ i.i.d. from a $\sigma$-subgaussian distribution $\nu$ on $\R^p$,\footnote{A random variable $X \in \bbR^p$ with mean $\mu$ is subgaussian with parameter $\sigma$ or the distribution is in $SG_p(\sigma)$ iff $P\left( \norm{X -\mu} \geq t \right)  \leq Ce^{-\frac{t^2}{2\sigma^2}}.$}  where $\Psi$ has the following structure: Suppose $X \sim \Psi$.

\begin{enumerate} 
    \item $\bbE X = 0; \var(X) = I_p$\footnote{While for the purposes of our proof, we have assumed that the covariance matrix is identity throughout, this assumption is heuristically not much different from working with whitened data.}.
    \item There is a (unknown) $k$-dimensional subspace $U$ such that $\Pi_U X$ has a sufficiently non-Gaussian distribution, and $\Pi_{U^{\perp}}X$ has a close-to-Gaussian distribution, in the sense that there exists constants $\kappa_1, \kappa_2$ with,
    \[
        \inf_{u \in U, \|u\| = 1} d_{W_2}(u \sharp \Psi , \Phi) > \kappa_1 > \kappa_2 >
        \sup_{u \in U^{\perp}, \|u\| = 1}d_{W_2}(u \sharp \Psi, \Phi).
    \]
    \item $\Pi_U X$ and $\Pi_{U^\perp} X$ are independent.
\end{enumerate}
\end{assumption}
We will denote the gaussian subspace, $U^{\perp}$ by $W$.
Given the sample $X_1, \cdots, X_n$, our goal is to recover the space $U$. The quantities $\kappa_1$ and $\kappa_2$, as we will see below, dictate a separation condition necessary to distinguish the non-Gaussian signal components from the Gaussian part.
 
 To further motivate our setup let us quickly look at a simple distribution satisfying the assumptions above :
 \begin{example}
Let the data $X$ be generated from a mixture of gaussians i.e. $\sum_{i=1}^k \frac{N(u_i, \bbI_p)}{k}$, where $u_i$'s are some vectors in $\bbR^p$. Note quickly that if $v$ is any norm $1$ vector orthogonal to all $u_i$, then $v^\top X$ follows $N(0,1)$ and is independent of $u_i^\top X$. Clearly then this is a specific example of our model, with $U = \textit{Span}\{u_1, ..., u_k\}$ and $W = U^{\perp}$. In fact the projection of $\Psi$ to the subspace $W$ is the distribution $N(0,I_{p-k})$.
\end{example}

Before moving onto the technical results we also quickly define the following notations we will be using throughout.
\begin{definition}  
Given a $p$-dimensional distribution $\nu$ and a $1$-dimensional distribution $\mu$
\[
    d(\nu, \mu) \coloneqq \sup_{u : \norm{u} = 1} d_{w_2}(u\sharp\nu, \mu).
\]  
\end{definition}

Note that when $p=1$, this is simply the $2$-Wasserstein distance between the $\nu$ and $\mu$. For larger $p$, when $\mu =\Phi$, our distance $d$ captures how non-gaussian the distribution $\nu$ can become in a particular direction.

\begin{definition}
    Given a $p$-dimensional distribution $\nu$ and a $1$-dimensional distribution $\mu$
\[
    d_{min}(\nu, \mu) \coloneqq \inf_{u : \norm{u} = 1} d_{w_2}(u\sharp\nu, \mu).
\]  
\end{definition}
In essence, when $\mu = \Phi$, $d_{min}$ gives a measure of separation from the $1$-dimensional gaussian.

\begin{definition}
    Let $W$ be some subspace of $\bbR^p$, and let $q_1, ..., q_t$ be some orthonormal basis of $W$. If $Q$ is the matrix whose columns are given by $q_i$ then given a random variable $X \in \bbR^p$, where $X \sim \mu$, we define $\mu_{|Q}$ to be the distribution of $Q^TX$.
\end{definition}
We remark that as the distance $d(\mu,\Phi)$ is rotationally invariant, given a fixed subspace $W$, $d(\mu_{|Q},\Phi)$ is the same regardless of what orthonormal basis one chooses. Thus one can consider the quantity $d(\mu_{|W},\Phi)$ unambiguously.

Finally for the sake of clarity of our conclusion we define a signal to noise ratio for the distribution $\Psi$, $SNR$ as
\[
  SNR = \sqrt{\frac{d(\Psi,\Phi)^2 - d(\Psi_{|W},\Phi)^2}{d(\Psi, \Phi)^2 - d_{min}(\Psi_{|U},\Phi)^2}}.
\]
Note that based on our definition of $\kappa_1$ and $\kappa_2$, our SNR is always larger than $\sqrt{\frac{d(\Psi,\Phi)^2 - \kappa_2^2}{d(\Psi, \Phi)^2 - \kappa_1^2}}$.
We will show that we can construct with high probability $k$ orthonormal vectors  $v_1, ..., v_k$ such that 
\[
    \norm{\textit{Proj}_{W}(\hat{v}_j)} \leq \frac{2}{SNR}.
\]
We also show that we can estimate $k$ if $k \leq \frac{SNR^2}{4}$.

\section{Main results}

The following proposition is the central pivot granting us leverage to most of our results.

\begin{proposition} \label{thm:pconc-const} 
Let $X_1, ..., X_n$ be $n$ data points from $\Psi$. Let $n$,$p$ go to infinity in a way such that $p/n \rightarrow \gamma$. Then given a positive constant $\epsilon$, there exists a positive constant $\gamma_{\sigma,\epsilon}$ depending on $\sigma$ and $\epsilon$ such that when $\gamma \leq \gamma_{\sigma,\epsilon}$, we have
$$P\left(sup_{u \in S_{p-1}} \abs{d_{W_2}\left(\frac{\Sigma_{i=1}^n \delta_{u^\top X_i}}{n}, u\sharp\Psi\right) - \bbE_{X_1, ..., X_n}\left[d_{W_2}\left(\frac{\Sigma_{i=1}^n \delta_{u^\top X_i}}{n}, u\sharp\Psi\right)\right]}  > \epsilon \right) < De^{-nc_{\sigma, \gamma,\epsilon}},$$
\end{proposition}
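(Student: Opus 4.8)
The plan is a net-and-union-bound argument. Write $\hat\mu_u \coloneqq \frac1n\sum_{i=1}^n\delta_{u^\top X_i}$, $\mu_u \coloneqq u\sharp\Psi$, and $Z_u \coloneqq d_{W_2}(\hat\mu_u,\mu_u) - \bbE[d_{W_2}(\hat\mu_u,\mu_u)]$, so $\bbE Z_u = 0$ for every unit $u$ and the quantity to control is $P(\sup_{u\in S_{p-1}}|Z_u|>\epsilon)$. The two facts that make this work are: \emph{(a)} $u\mapsto Z_u$ is, with overwhelming probability, Lipschitz on $S_{p-1}$ with an $O(1)$ constant (not $O(\sqrt p)$), so a net of \emph{constant} mesh suffices; and \emph{(b)} for each fixed $u$, $d_{W_2}(\hat\mu_u,\mu_u)$ concentrates around its mean at an \emph{exponential-in-$n$} rate, which is what lets the union bound over a net of cardinality $e^{\Theta(\gamma n)}$ close --- but only once $\gamma$ is small.

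\emph{Step 1 (continuity in $u$).} For unit $u,u'$, the triangle inequality for $d_{W_2}$ together with the coupling matching $\delta_{u^\top X_i}$ with $\delta_{u'^\top X_i}$ gives $d_{W_2}(\hat\mu_u,\hat\mu_{u'})^2 \le (u-u')^\top\hat\Sigma(u-u') \le \norm{u-u'}^2\,\lambda_{\max}(\hat\Sigma)$ with $\hat\Sigma \coloneqq \frac1n\sum_i X_iX_i^\top$, and the coupling $X\mapsto X$ gives $d_{W_2}(\mu_u,\mu_{u'})^2 \le \bbE[((u-u')^\top X)^2] = \norm{u-u'}^2$, using $\var(X)=I_p$. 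Taking expectations for the centring term too, $|Z_u - Z_{u'}| \le (\sqrt{\lambda_{\max}(\hat\Sigma)} + \sqrt{\bbE\lambda_{\max}(\hat\Sigma)} + 2)\,\norm{u-u'}$. I would then quote the standard operator-norm concentration for sample covariances of $\sigma$-subgaussian vectors with $p/n$ bounded: $\lambda_{\max}(\hat\Sigma)\le L(\sigma)^2$ outside an event $\mathcal E_{\mathrm{op}}$ with $P(\mathcal E_{\mathrm{op}})\le De^{-c_{\mathrm{op}}n}$ (here I use $\gamma\le 1$ to make the bound depend on $\sigma$ only). Thus off $\mathcal E_{\mathrm{op}}$ the map $u\mapsto Z_u$ is $L$-Lipschitz with $L=L(\sigma)$ a constant.

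\emph{Step 2 (fixed-$u$ exponential tail).} Fix $u$ and set $Y_i\coloneqq u^\top X_i$, i.i.d.\ and $\sigma$-subgaussian. Pick $R=R(\sigma,\epsilon)$ large, truncate $\tilde Y_i\coloneqq Y_i\mathbf 1\{|Y_i|\le R\}$, and let $\hat\nu_u\coloneqq\frac1n\sum_i\delta_{\tilde Y_i}$, $\nu_u\coloneqq\mathrm{Law}(\tilde Y_1)$. By the triangle inequality $d_{W_2}(\hat\mu_u,\mu_u)\le d_{W_2}(\hat\mu_u,\hat\nu_u)+d_{W_2}(\hat\nu_u,\nu_u)+d_{W_2}(\nu_u,\mu_u)$. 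The first term is $\le\big(\frac1n\sum_i Y_i^2\mathbf 1\{|Y_i|>R\}\big)^{1/2}$, a normalised sum of i.i.d.\ sub-exponential summands of common mean $\le\epsilon^2/200$ for $R$ large, so Bernstein's inequality bounds it by $\epsilon/12$ outside probability $\le e^{-c_1 n}$; the third term is deterministic and equals $\sqrt{\bbE[Y_1^2\mathbf 1\{|Y_1|>R\}]}\le\epsilon/12$. For the middle term, $\hat\nu_u,\nu_u$ are supported in $[-R,R]$, so every coupling $\pi$ has $\bbE_\pi(X-Y)^2\le 2R\,\bbE_\pi|X-Y|$; minimising over $\pi$, $d_{W_2}(\hat\nu_u,\nu_u)^2\le 2R\int_\bbR|\hat F(x)-F(x)|\,dx\le 4R^2\sup_x|\hat F(x)-F(x)|$ with $\hat F,F$ the empirical and population c.d.f.'s, and Dvoretzky--Kiefer--Wolfowitz bounds the last supremum by $\epsilon^2/(288R)$ outside probability $\le 2e^{-c_2 n}$. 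Adding up, $P(d_{W_2}(\hat\mu_u,\mu_u)>\epsilon/4)\le 3e^{-c_3 n}$ with $c_3=c_3(\sigma,\epsilon)$ not depending on $\gamma$. Since $\bbE[d_{W_2}^2(\hat\mu_u,\mu_u)]$ is crudely bounded (by $2\,\bbE[\frac1n\sum_i Y_i^2]+2\le 4$) while the previous display forces $d_{W_2}(\hat\mu_u,\mu_u)$ small with overwhelming probability, dominated convergence (or classical one-dimensional empirical-$W_2$ rates, uniform over $\sigma$-subgaussian laws) gives $\bbE[d_{W_2}(\hat\mu_u,\mu_u)]\le\epsilon/4$ for $n\ge n_0(\sigma,\epsilon)$, whence $P(|Z_u|>\epsilon/2)\le 3e^{-c_3 n}$ for all such $n$ and all $u$.

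\emph{Step 3 (union bound) and main obstacle.} Let $\mathcal N$ be an $\tfrac{\epsilon}{2L}$-net of $S_{p-1}$, so $|\mathcal N|\le(1+4L/\epsilon)^p=e^{\gamma n\log(1+4L/\epsilon)+o(n)}$; by Step 2, $P(\max_{u\in\mathcal N}|Z_u|>\epsilon/2)\le 3e^{n(\gamma\log(1+4L/\epsilon)-c_3)+o(n)}$. Defining $\gamma_{\sigma,\epsilon}\coloneqq\min\{1,\ c_3(\sigma,\epsilon)/(2\log(1+4L(\sigma)/\epsilon))\}$, this is $\le 3e^{-c_3 n/4}$ whenever $\gamma\le\gamma_{\sigma,\epsilon}$ and $n$ is large. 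On $\mathcal E_{\mathrm{op}}^c$, Step 1 gives $\sup_{u\in S_{p-1}}|Z_u|\le\max_{u\in\mathcal N}|Z_u|+L\cdot\tfrac{\epsilon}{2L}=\max_{u\in\mathcal N}|Z_u|+\epsilon/2$, so $P(\sup_u|Z_u|>\epsilon)\le P(\mathcal E_{\mathrm{op}})+P(\max_{u\in\mathcal N}|Z_u|>\epsilon/2)\le De^{-c_{\mathrm{op}}n}+3e^{-c_3 n/4}$, of the claimed form $D'e^{-nc_{\sigma,\gamma,\epsilon}}$ (inflate $D'$ to absorb the finitely many $n<n_0$). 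I expect the crux to be the exponential rate in Step 2: bounded differences/McDiarmid applied to $(X_1,\dots,X_n)\mapsto d_{W_2}(\hat\mu_u,\mu_u)$ only yields a bound \emph{constant} in $n$ (one sample can move $d_{W_2}$ by $\Theta(R/\sqrt n)$ with no guaranteed cancellation), and Markov via the $O(n^{-1/2})$ mean gives only polynomial decay --- neither beats the $e^{\Theta(\gamma n)}$ net. Turning the $\sqrt n$-rate empirical-c.d.f.\ statement into an exponential tail through the elementary bound $d_{W_2}(\hat\nu_u,\nu_u)^2\le 2R\int_\bbR|\hat F-F|$ on a bounded support is what makes the proposition go through; a secondary, milder ingredient is the Bai--Yin-type fact that $\lambda_{\max}(\hat\Sigma)$ stays $O(1)$ (rather than the trivial $\tfrac1n\sum_i\norm{X_i}^2\asymp p$), which is what permits a constant-mesh net.
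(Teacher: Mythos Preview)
Your proof is correct and shares the paper's overall architecture: Lipschitz control of $u\mapsto d_{W_2}(\hat\mu_u,\mu_u)$ via the operator norm of the sample covariance (your Step~1), a fixed-$u$ exponential-in-$n$ tail (Step~2), and a union bound over an $\epsilon$-net of log-cardinality $\Theta(\gamma n)$, which closes once $\gamma$ falls below a $(\sigma,\epsilon)$-dependent threshold (Step~3). The one genuine difference is how Step~2 is obtained. The paper invokes its Lemma~\ref{lem:w_2-conc}, a direct concentration inequality $P(|d_{W_2}(\mu_n,\mu)-\bbE d_{W_2}(\mu_n,\mu)|\ge t)\le Ce^{-cnt^2/\sigma^2}$ for subgaussian $\mu$, proved by bounding the concentration function $\alpha_\mu$ (Lemma~\ref{lem:subg-conc-func}) and plugging into the Bobkov--Ledoux machinery (Theorem~7.1 of \cite{bobkov2019one}). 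You instead establish the stronger fact that $d_{W_2}(\hat\mu_u,\mu_u)$ is itself small with exponential probability: truncate to $[-R,R]$, handle the tail mass by Bernstein, and on the bounded support convert $d_{W_2}$ to the Kolmogorov distance via $d_{W_2}^2\le 2R\,d_{W_1}\le 4R^2\sup_x|\hat F-F|$, where DKW supplies the $e^{-cn}$ rate; smallness of the mean then follows and gives $|Z_u|\le\epsilon/2$. Your route is more elementary and fully self-contained (no appeal to concentration-function theory), and your diagnosis that bounded-differences alone yields only an $n$-free tail is exactly right. The paper's route, by contrast, gives a cleaner one-line reduction once Lemma~\ref{lem:w_2-conc} is in hand and produces the sharper subgaussian form $e^{-cnt^2/\sigma^2}$ uniformly in $t$, whereas your exponent depends on the fixed $\epsilon$ through $R$. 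One minor arithmetic slip: the DKW threshold you need scales like $\epsilon^2/R^2$, not $\epsilon^2/R$; this does not affect the argument.
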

where $c_{\sigma,\gamma,\epsilon}$ is some positive constant dependent on $\sigma$, $\gamma$, and $\epsilon$.

Proposition~\ref{thm:pconc-const} uniformly bounds the difference between the data dependent (and thus random) quantity, $d_{W_2}\left(\frac{\Sigma_{i=1}^n \delta_{u^\top X_i}}{n}, u\sharp\Psi\right),$ and the deterministic quantity,$ \bbE_{X_1, ..., X_n}\left[d_{W_2}\left(\frac{\Sigma_{i=1}^n \delta_{u^\top X_i}}{n}, u\sharp\Psi\right)\right],$ dependent only on $u$.

Then given our assumptions~\ref{ass:distr} on the distribution $\Psi$ above, we will state the following theorem (proved in Section~\ref{proof}): 
\begin{theorem}[Empirical non-gaussianity implies true non-gaussianity] \label{thm:nongaus}
        Let $X_1, ..., X_n$ be $n$ data points from $\Psi$. Let $n$,$p$ go to infinity in a way such that $p/n \to \gamma$. Given an $\epsilon > 0$, there exists a constants $\gamma_{\sigma,\epsilon}$ dependent on $\sigma$ and $\epsilon$  and  $C_{\sigma}$  depending on $\sigma$ such that if $\gamma \leq \gamma_{\sigma,\epsilon}$, the following statement is true with high probability for all  unit vectors $u$ in $\bbR^p$ simultaneously : 
        \[\abs{d_{W_2}\left(\frac{\Sigma_{i=1}^n \delta_{u^\top X_i}}{n}, \Phi\right) - d_{W_2}\left(u\sharp\Psi, \Phi\right)}  \leq \epsilon + \frac{C_{\sigma}}{\sqrt[4]{n}}.\]
\end{theorem}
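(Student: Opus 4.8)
The plan is to reduce everything to the rate of convergence of a one-dimensional empirical measure in $2$-Wasserstein distance, and then to control that rate uniformly over directions. First I would invoke the triangle inequality for $d_{W_2}$: for any probability measures $\alpha, \beta$ on $\bbR$ one has $\abs{d_{W_2}(\alpha, \Phi) - d_{W_2}(\beta, \Phi)} \le d_{W_2}(\alpha, \beta)$. Applying this with $\alpha = \widehat{\mu}_u \coloneqq \frac{1}{n}\sum_{i=1}^n \delta_{u^\top X_i}$ and $\beta = u\sharp\Psi$ reduces Theorem~\ref{thm:nongaus} to showing that, with high probability, $d_{W_2}(\widehat{\mu}_u, u\sharp\Psi) \le \epsilon + C_\sigma n^{-1/4}$ for all unit $u$ simultaneously. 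The point to notice is that $\widehat{\mu}_u$ is precisely the empirical distribution of $n$ i.i.d.\ samples from the one-dimensional law $u\sharp\Psi$.

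Next I would split
\[
  d_{W_2}(\widehat{\mu}_u, u\sharp\Psi) \le \Bigl| d_{W_2}(\widehat{\mu}_u, u\sharp\Psi) - \bbE\, d_{W_2}(\widehat{\mu}_u, u\sharp\Psi) \Bigr| + \bbE\, d_{W_2}(\widehat{\mu}_u, u\sharp\Psi),
\]
and handle the two terms separately. The first (fluctuation) term is exactly what Proposition~\ref{thm:pconc-const} controls: as long as $\gamma \le \gamma_{\sigma,\epsilon}$, on an event of probability at least $1 - De^{-nc_{\sigma,\gamma,\epsilon}}$ it is at most $\epsilon$ for every unit $u$ at once. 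For the second (bias) term I would use that $u\sharp\Psi$ is mean-zero and $\sigma$-subgaussian on $\bbR$ with the \emph{same} parameter $\sigma$ for every $u$ (since $|u^\top(X-\bbE X)| \le \norm{X - \bbE X}$), so any bound depending only on $\sigma$ is automatically uniform in $u$. By Jensen, $\bbE\, d_{W_2}(\widehat{\mu}_u, u\sharp\Psi) \le (\bbE\, d_{W_2}^2(\widehat{\mu}_u, u\sharp\Psi))^{1/2}$, and the classical $n^{-1/2}$ rate for the mean-squared $2$-Wasserstein error of a one-dimensional empirical measure then yields $\bbE\, d_{W_2}(\widehat{\mu}_u, u\sharp\Psi) \le C_\sigma n^{-1/4}$. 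Combining the three displays gives the claim.

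The hard part will be the bias bound: proving that $\bbE\, d_{W_2}^2(\widehat{\mu}_u, u\sharp\Psi) \le C'_\sigma n^{-1/2}$ with a constant depending only on $\sigma$. The one-dimensional structure is essential here (the rate degrades in higher dimensions), and I would work from the quantile representation $d_{W_2}^2(\widehat{\mu}_u, u\sharp\Psi) = \int_0^1 (F_n^{-1}(t) - F^{-1}(t))^2\,dt$, or from a Fournier--Guillin type moment estimate. Some care is needed because the density of $u\sharp\Psi$ may vanish or be ill-behaved, so density-based bounds are fragile; the robust route is to truncate the tails at level $\sim \sigma\sqrt{\log n}$ (using subgaussianity to make the discarded mass negligible) and bound the bulk and the tail contributions separately. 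Everything else --- the triangle inequality, the uniform-in-$u$ subgaussianity of marginals, and the appeal to Proposition~\ref{thm:pconc-const} --- is routine.
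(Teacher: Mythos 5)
Your proposal is correct and follows essentially the same route as the paper's proof: reverse triangle inequality to reduce to $d_{W_2}(\widehat{\mu}_u, u\sharp\Psi)$, split into the fluctuation term handled by Proposition~\ref{thm:pconc-const} and the expectation term handled by Jensen plus the one-dimensional $n^{-1/2}$ rate (the paper just cites Proposition~\ref{lem:sampleconverge}, i.e.\ Corollary 7.17 of \cite{bobkov2019one}, using the uniform-in-$u$ subgaussianity of marginals to get a $u$-independent constant $C_\sigma$). The only difference is cosmetic: where you sketch a truncation/quantile argument to reprove the mean-squared rate, the paper simply invokes the cited result.
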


We note that Theorem~\ref{thm:nongaus} needs very little assumptions on the distribution $\Psi$. We only need $\Psi$ to be $\sigma$-subgaussian. The main upshot of the theorem is that it implies with uniform high probability that in every direction the empirical distribution of the projection is as far away from Gaussian, as the true marginal distribution in that direction. Thus heuristically if we want to find directions in which $\Psi$ is not gaussian it makes sense to maximise the quantity $d_{W_2}\left(\frac{\Sigma_{i=1}^n \delta_{u^\top X_i}}{n}, \Phi\right).$ We can now proceed to state conditions under which the recovered directions have a very small component in the Gaussian subspace, $W$. A proof of Theorem~\ref{thm:altmain}  is written in Section~\ref{proof}.

\begin{theorem}[recovered direction is almost orthogonal to gaussian subspace] \label{thm:altmain} Let $W$ be the gaussian subspace of $\Psi$. Let $X_1, ..., X_n$ be $n$ data points from $\Psi$. Let $n$,$p$ go to infinity in a way such that $p/n \to \gamma$. Given $\epsilon > 0$, there exists a constant $\gamma_{\sigma,\epsilon}$ dependent on $\sigma$ and $\epsilon$ such that if $\gamma \leq \gamma_{\sigma,\epsilon}$, then with asymptotic high probability for any $u \in S_{p-1}$ such that $d_{W_2}\left(\frac{\Sigma_{i=1}^n \delta_{u^\top X_i}}{n}, \Phi\right) \geq \sqrt{1-\delta^2}d(\Psi,\Phi) + \epsilon + \frac{C_{\sigma}}{\sqrt[4]{n}}, $
 we have that \[\norm{\textit{Proj}_{W}(u)} \leq \delta\frac{d(\Psi,\Phi)}{\sqrt{d(\Psi, \Phi)^2 - d(\Psi_{|W}, \Phi)^2}}.\]
 \end{theorem}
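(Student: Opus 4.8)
\emph{Proof strategy.} The plan is to first strip the randomness away using Theorem~\ref{thm:nongaus}, reducing the statement to a deterministic inequality about one-dimensional marginals of $\Psi$, and then to prove that inequality via a tensorization property of the $2$-Wasserstein distance against Gaussians. Concretely, I would work on the asymptotically-high-probability event on which Theorem~\ref{thm:nongaus} holds; there, the hypothesis $d_{W_2}\left(\frac{\sum_i \delta_{u^\top X_i}}{n},\Phi\right) \ge \sqrt{1-\delta^2}\,d(\Psi,\Phi) + \epsilon + \frac{C_\sigma}{\sqrt[4]{n}}$ gives, for every such $u$ simultaneously,
\[
    d_{W_2}(u\sharp\Psi,\Phi) \;\ge\; d_{W_2}\left(\frac{\sum_i \delta_{u^\top X_i}}{n},\Phi\right) - \epsilon - \frac{C_\sigma}{\sqrt[4]{n}} \;\ge\; \sqrt{1-\delta^2}\,d(\Psi,\Phi),
\]
so it is enough to prove the deterministic implication: $d_{W_2}(u\sharp\Psi,\Phi)^2 \ge (1-\delta^2)\,d(\Psi,\Phi)^2$ forces $\norm{\textit{Proj}_{W}(u)}^2 \le \delta^2 d(\Psi,\Phi)^2 / \big(d(\Psi,\Phi)^2 - d(\Psi_{|W},\Phi)^2\big)$.

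For the deterministic part I would write $u = \alpha a + \beta b$ with $a \in U$ and $b \in W = U^\perp$ unit vectors, $\alpha = \norm{\textit{Proj}_{U}(u)} \ge 0$, $\beta = \norm{\textit{Proj}_{W}(u)} \ge 0$, and $\alpha^2 + \beta^2 = 1$ (the degenerate cases $\alpha\beta = 0$ being handled by the same computation). By Assumption~\ref{ass:distr}(3), $u^\top X = \alpha(a^\top X) + \beta(b^\top X)$ is a sum $Y + Z$ of independent centered random variables. I would then write a standard Gaussian as $G = G_1 + G_2$ with $G_1 \sim N(0,\alpha^2)$ and $G_2 \sim N(0,\beta^2)$ independent, couple $Y$ to $G_1$ and $Z$ to $G_2$ each $W_2$-optimally and independently across the two pairs, and note that this yields a valid coupling of $u\sharp\Psi$ with $\Phi$. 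Since $Y-G_1$ and $Z-G_2$ are independent and centered, the cross term in $\bbE[(Y-G_1)+(Z-G_2)]^2$ vanishes, and using the homogeneity identity $d_{W_2}(\mathrm{law}(cV),N(0,c^2)) = c\,d_{W_2}(\mathrm{law}(V),\Phi)$ I get $d_{W_2}(u\sharp\Psi,\Phi)^2 \le \alpha^2 d_{W_2}(a\sharp\Psi,\Phi)^2 + \beta^2 d_{W_2}(b\sharp\Psi,\Phi)^2$. Bounding $d_{W_2}(a\sharp\Psi,\Phi) \le d(\Psi,\Phi)$ by the definition of $d$, and $d_{W_2}(b\sharp\Psi,\Phi) \le d(\Psi_{|W},\Phi)$ since $b$ is a unit vector in $W$, and then substituting $\alpha^2 = 1-\beta^2$ together with the lower bound above, gives
\[
    (1-\delta^2)\,d(\Psi,\Phi)^2 \;\le\; d(\Psi,\Phi)^2 - \beta^2\big(d(\Psi,\Phi)^2 - d(\Psi_{|W},\Phi)^2\big);
\]
solving for $\beta^2 = \norm{\textit{Proj}_{W}(u)}^2$ is exactly the claimed inequality, and the denominator is $> \kappa_1^2 - \kappa_2^2 > 0$ by Assumption~\ref{ass:distr}(2), so the manipulation is legitimate and the bound non-vacuous.

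I expect the only genuinely non-routine ingredient to be the tensorization inequality $d_{W_2}(Y+Z,\Phi)^2 \le d_{W_2}(Y,N(0,\alpha^2))^2 + d_{W_2}(Z,N(0,\beta^2))^2$ for independent centered $Y,Z$ --- this is the ``peculiar property'' of $W_2$ alluded to in the introduction. It holds precisely because a variance-$1$ Gaussian factors as an independent sum of Gaussians of variances $\alpha^2$ and $\beta^2$ matched to those of $Y$ and $Z$, and because centering annihilates the cross term; no analogous inequality holds for a general, non-Gaussian reference measure, which is exactly why this projection-pursuit scheme is informative. Everything else is bookkeeping, and the only points I would be careful about in a full write-up are the measurability of the product coupling, the homogeneity identity, and the degenerate cases $\alpha = 0$ and $\beta = 0$.
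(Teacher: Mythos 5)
Your proposal is correct and follows essentially the same route as the paper: decompose $u$ as a combination of a unit vector in $U=W^\perp$ and one in $W$, exploit the independence from Assumption~\ref{ass:distr}(3) together with the fact that $N(0,1)$ factors into an independent sum of $N(0,\alpha^2)$ and $N(0,1-\alpha^2)$, take a product of two $W_2$-optimal couplings so that the centered cross term vanishes, and conclude $d_{W_2}(u\sharp\Psi,\Phi)^2 \le \alpha^2 d(\Psi,\Phi)^2 + (1-\alpha^2)\,d(\Psi_{|W},\Phi)^2$, then combine with Theorem~\ref{thm:nongaus} and solve for $\norm{\mathrm{Proj}_W(u)}^2$. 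The only cosmetic difference is that the paper works with the unscaled pairs $(Y_1,Z_1),(Y_2,Z_2)$ and pulls out $\alpha^2,1-\alpha^2$ after expanding the square, whereas you absorb the scalars into $Y,Z,G_1,G_2$ and invoke the homogeneity identity $d_{W_2}(\mathrm{law}(cV),N(0,c^2))=c\,d_{W_2}(\mathrm{law}(V),\Phi)$ explicitly --- these are the same computation. Your added care about degenerate $\alpha\beta=0$ cases and positivity of the denominator via $\kappa_1^2-\kappa_2^2>0$ is sound and a nice touch, though the paper leaves those implicit.
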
 

Now that we have introduced most of our bulky technology, we can use it to prove the following simple Corollary. In the interest of space we have moved a detailed proof of the Corollary to the Appendix in Section~\ref{app:cor_k-col}. This in turn allows us to argue the validity of procedure in the vein of the general idea of matching pursuit.

\begin{corollary}[Guarantee that recovery is possible]\label{thm:k-col}
    Let $U$ be a $k$-dimensional sub-space of $\bbR^p$, where $k$ is a constant. Let $X_1, ..., X_n$ be $n$ data points from $\Psi$. Let $l < k$ be some integer. Let $v_1 , ..., v_l$ be some vectors in $\bbR^p$. Then there exists some constant $C_{\sigma}$, depending on $\sigma$ such that given $\epsilon > 0$ there exists with high probability a unit vector, $u$ which is orthonormal to all $v_i$ such that 
    \[
        d_{W_2}\bigg(\frac{1}{n}\sum_{i=1}^n \delta_{u^\top X_i}, \Phi\bigg) \geq d_{min}(\Psi_{|U},\Phi) - \epsilon -  \frac{C_{\sigma}}{\sqrt[4]{n}}.
    \]
\end{corollary}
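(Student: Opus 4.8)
The plan is to reduce the statement to a one-line dimension count followed by a single application of Theorem~\ref{thm:nongaus}. First I would produce the candidate direction by pure linear algebra: since $U$ has dimension $k$ and $\mathrm{span}\{v_1,\dots,v_l\}$ has dimension at most $l<k$, the subspace $U\cap\mathrm{span}\{v_1,\dots,v_l\}^{\perp}$ has dimension at least $k-l\ge 1$, so one can pick a unit vector $u$ lying inside it. By construction $u\perp v_i$ for every $i$ and, crucially, $u\in U$.

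Next I would relate the true non-Gaussianity of $\Psi$ along $u$ to $d_{min}(\Psi_{|U},\Phi)$. Fixing an orthonormal basis $Q=[q_1,\dots,q_k]$ of $U$ and letting $X\sim\Psi$, for $u\in U$ we have $u^\top X=u^\top\Pi_U X=(Q^\top u)^\top(Q^\top X)$ with $Q^\top u$ a unit vector of $\bbR^k$; hence $u\sharp\Psi=(Q^\top u)\sharp\Psi_{|Q}$ and therefore
\[
d_{W_2}(u\sharp\Psi,\Phi)=d_{W_2}\big((Q^\top u)\sharp\Psi_{|Q},\Phi\big)\geq d_{min}(\Psi_{|U},\Phi),
\]
the right-hand side being unambiguous by the rotational-invariance remark following the definition of $\mu_{|Q}$ (indeed $d_{min}(\Psi_{|U},\Phi)=\inf_{u\in U,\,\norm{u}=1}d_{W_2}(u\sharp\Psi,\Phi)$, so any unit vector of $U$ would do).

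Finally I would invoke Theorem~\ref{thm:nongaus} with the prescribed $\epsilon$: with high probability the bound
\[
d_{W_2}\left(\frac{1}{n}\sum_{i=1}^n\delta_{u^\top X_i},\Phi\right)\geq d_{W_2}(u\sharp\Psi,\Phi)-\epsilon-\frac{C_\sigma}{\sqrt[4]{n}}
\]
holds simultaneously for all unit vectors, in particular for the $u$ selected above, and combining it with the previous display gives exactly the asserted inequality. One subtlety worth recording is that $v_1,\dots,v_l$ --- and hence $u$ --- may be data-dependent (e.g.\ previously recovered directions in a matching-pursuit scheme), so it is essential that Theorem~\ref{thm:nongaus} is uniform over the whole unit sphere, not merely valid for a single fixed direction.

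As for the main obstacle, there really is none internal to this corollary: all of the analytic content has already been absorbed into Theorem~\ref{thm:nongaus} and, underneath it, the uniform-over-the-sphere concentration of Proposition~\ref{thm:pconc-const}. The only points requiring care are (i) ensuring the dimension count truly yields a vector lying in $U$ \emph{and} orthogonal to every $v_i$, and (ii) checking that the additive error $\epsilon+C_\sigma n^{-1/4}$ and the ``with high probability'' qualifier are inherited verbatim from Theorem~\ref{thm:nongaus}.
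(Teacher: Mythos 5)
Your argument has the right shape, but it relies on a hypothesis that the corollary does not grant. You invoke Theorem~\ref{thm:nongaus}, which is uniform over the entire sphere $\bbS_{p-1}$ and therefore requires $p/n\to\gamma\le\gamma_{\sigma,\epsilon}$. Corollary~\ref{thm:k-col}, however, is stated with no condition at all on the asymptotics of $p/n$ --- and indeed it \emph{should} hold regardless, because the quantity being controlled only involves projections along directions in the fixed $k$-dimensional subspace $U$. In the regime $p/n\to\infty$ (the Bickel et al.\ / Montanari--Saeed regime discussed in the introduction) your appeal to Theorem~\ref{thm:nongaus} simply fails, even though the corollary remains true.

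The paper's proof sidesteps this by applying Proposition~\ref{thm:pconc-const} \emph{inside} $U$: pick an orthonormal basis $Q$ of $U$, observe that for unit $v\in U$ one has $v^\top X=(Q^\top v)^\top(Q^\top X)$ with $Q^\top X\sim\Psi_{|Q}$ a $k$-dimensional subgaussian distribution, and run the $\delta$-net/union-bound argument of Proposition~\ref{thm:pconc-const} over the $(k-1)$-sphere. Since $k$ is a constant, the effective aspect ratio is $k/n\to 0$, so the threshold $\gamma_{\sigma,\epsilon}$ is trivially met for all large $n$, and one gets uniformity over $\{v\in U:\norm{v}=1\}$ for free. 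That is exactly enough: your dimension-count step correctly produces a candidate $u\in U$ orthogonal to $v_1,\dots,v_l$ (possibly data-dependent), and uniformity over the unit sphere of $U$ suffices to absorb that data-dependence; uniformity over all of $\bbS_{p-1}$ is unnecessary overkill that happens to cost you an extra hypothesis. Your lower bound $d_{W_2}(u\sharp\Psi,\Phi)\ge d_{min}(\Psi_{|U},\Phi)$ for $u\in U$ is correct, and the triangle-inequality finish is also correct. So the fix is local: replace the appeal to Theorem~\ref{thm:nongaus} with a restricted version of Proposition~\ref{thm:pconc-const} (plus Proposition~\ref{lem:sampleconverge} for the expectation term) on the $k$-dimensional sphere in $U$.
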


Suppose now that the distribution $\Psi$ is such that there is a $k$ dimensional subspace $U$ such that $d_{min}(\Psi_{|U},\Phi) = \sqrt{1-\delta^2} d(\Psi,\Phi)$. That is the ``top $k$ directions'' are a constant factor far from gaussian as the maximum possible. Then using Corollary~\ref{thm:k-col} we can with high probability sequentially construct vectors $v_1, ..., v_k$ such that for every $1 \leq j \leq k$,
    \[
    d_{W_2}\left(\frac{\Sigma_{i=1}^n \delta_{v_j^TX_i}}{n}, \Phi\right) \geq \sqrt{1-\delta^2} d(\Psi,\Phi) - \epsilon - \frac{C_{\sigma}}{\sqrt[4]{n}}.
    \]
Then setting $\epsilon = \frac{\delta^2 d(\Psi,\Phi)}{2}$, and $n$ large enough such that $\frac{4C_{\sigma}}{\sqrt[4]{n}} \leq  \delta^2 d(\Psi,\Phi)$, we have that 
\[
    d_{W_2}\left(\frac{\Sigma_{i=1}^n \delta_{v_j^TX_i}}{n}, \Phi\right) \geq \sqrt{1-\delta^2} d(\Psi,\Phi) - \epsilon - \frac{C_{\sigma}}{\sqrt[4]{n}} \geq \sqrt{1-4\delta^2} d(\Psi,\Phi) + \epsilon  +\frac{C_{\sigma}}{\sqrt[4]{n}}..
\]
Now we can use Theorem~\ref{thm:altmain} with $\epsilon$ as set above. Thus if $p/n$ converges to a sufficiently small constant $\gamma$, then for large enough $n$ with high probability we have that for every $j$,
\[
\norm{\textit{Proj}_{W}(v_j)} \leq 2\delta\frac{d(\Psi,\Phi)}{\sqrt{d(\Psi, \Phi)^2 - d(\Psi_{|W}, \Phi)^2}}.
\]

In other words the $k$-space that we found (i.e. the one spanned by $v_1, ..., v_k$) is mostly orthogonal to $W$, the subspace where the distribution is close to Gaussian. The above discussion then gives the following natural method to estimate $k$ vectors which are almost orthogonal to $W$. For $1 \leq j \leq k$, let 
\[
    \hat{v}_j =  \argmax_{\norm{\hat{v}_j} = 1; \forall t < j : \hat{v}_j^T\hat{v}_t = 0} d_{W_2}\bigg(\frac{1}{n}\sum_{i=1}^n \delta_{\hat{v}_j^TX_i},\Phi\bigg).
\]

The above discussion then implies that, with high probability for large enough $n$ we have that (by invoking Theorem~\ref{thm:altmain} with $\delta$ such that $d_{min}(\Psi_{|U},\Phi) = \sqrt{1-\delta^2} d(\Psi,\Phi)$):
\[
\norm{\textit{Proj}_{W}(\hat{v}_j)} \leq 2\sqrt{\frac{d(\Psi,\Phi)^2 - d_{min}(\Psi_{|U},\Phi)^2}{d(\Psi, \Phi)^2 - d(\Psi_{|W},\Phi)^2}} = \frac{2}{SNR}.
\]

A common problem that often occurs in such problems is that $k$ is unknown. To give some answer to this question we first consider the following corollary which is proved in detail in the Appendix in Section~\ref{app:cor_choosing-k}.
\begin{corollary}\label{thm:choosing-k}
    Given  integers $m > k+1$, let $\delta$ be a positive real number such that $4\delta^2 < \frac{1}{m} \left(1 - \frac{d(\Psi_{|W}, \Phi)^2}{d(\Psi, \Phi)^2}\right).$ Let $X_1, ..., X_n$ be $n$ data points from $\Psi$. Let $n$,$p$ go to infinity in a way such that $p/n \rightarrow \gamma$ Given $\epsilon > 0$ there is a $\gamma_{\sigma,\epsilon}$, where $\gamma_{\sigma,\epsilon}$ is a constant depending on $\sigma$, $\epsilon$, such that if $\gamma \leq \gamma_{\sigma,\epsilon}$ then with high probability there \textbf{does not exist} a set of $k+1$ orthonormal unit vectors $v_1, ..., v_{k+1}$ such that 
    \[ 
        d_{W_2}\bigg(\frac{1}{n}\sum_{i=1}^n \delta_{v_j^TX_i}, \Phi\bigg) \geq \sqrt{1-4\delta^2} d(\Psi,\Phi) + \epsilon + \frac{C_{\sigma}}{\sqrt[4]{n}}. \]
\end{corollary}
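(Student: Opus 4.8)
The plan is to argue by contradiction, leveraging Theorem~\ref{thm:altmain} to show that if such $k+1$ orthonormal vectors existed, then all of them would be forced to have small projection onto $W$, hence large projection onto $U$; but $U$ is only $k$-dimensional, so $k+1$ orthonormal vectors cannot all be nearly contained in it --- a dimension count gives the contradiction. Concretely, suppose such $v_1,\dots,v_{k+1}$ exist. Since $\sqrt{1-4\delta^2}\,d(\Psi,\Phi) = \sqrt{1-(2\delta)^2}\,d(\Psi,\Phi)$, applying Theorem~\ref{thm:altmain} with the parameter $2\delta$ in place of $\delta$ tells us that, on the high-probability event of that theorem (which holds provided $\gamma \le \gamma_{\sigma,\epsilon}$ for an appropriate threshold), each $v_j$ satisfies
\[
\norm{\textit{Proj}_W(v_j)} \;\le\; 2\delta\,\frac{d(\Psi,\Phi)}{\sqrt{d(\Psi,\Phi)^2 - d(\Psi_{|W},\Phi)^2}}.
\]
Call this upper bound $\beta$. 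The hypothesis $4\delta^2 < \tfrac1m\big(1 - d(\Psi_{|W},\Phi)^2/d(\Psi,\Phi)^2\big)$ is engineered precisely so that $\beta^2 < 1/m$, i.e. $\sum_{j=1}^{k+1}\beta^2 \le (k+1)\beta^2 < (k+1)/m < 1$ using $m > k+1$; this slack is what drives the contradiction.

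Next I would turn the smallness of $\norm{\textit{Proj}_W(v_j)}$ into an approximate-containment statement: since $W = U^\perp$, we have $\norm{\textit{Proj}_U(v_j)}^2 = 1 - \norm{\textit{Proj}_W(v_j)}^2 \ge 1 - \beta^2$ for every $j$. Let $P = \textit{Proj}_U$, a rank-$k$ orthogonal projection, and consider the vectors $w_j = P v_j$. Then $\langle w_i, w_j\rangle = \langle P v_i, v_j\rangle$, and since $v_i = w_i + \textit{Proj}_W(v_i)$ with the two pieces orthogonal, one checks $|\langle w_i, w_j\rangle - \langle v_i, v_j\rangle| = |\langle \textit{Proj}_W(v_i), \textit{Proj}_W(v_j)\rangle| \le \beta^2$ for $i \ne j$, while $\langle v_i, v_j\rangle = 0$ by orthonormality. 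So the Gram matrix $G$ of $w_1,\dots,w_{k+1}$ has diagonal entries $\ge 1-\beta^2$ and off-diagonal entries of magnitude $\le \beta^2$. The matrix $G$ lives in the span of $k$-dimensional vectors, so $\textrm{rank}(G) \le k$, meaning $G$ is singular and has a nontrivial kernel vector $a \in \R^{k+1}$, $\norm{a}=1$. Then $0 = a^\top G a \ge (1-\beta^2)\norm{a}^2 - \beta^2 \sum_{i\ne j}|a_i||a_j| \ge (1-\beta^2) - \beta^2 (k+1)$, using $\sum_{i\ne j}|a_i||a_j| \le (k+1)\norm{a}^2 = k+1$ (or, more tightly, $\le k$; either suffices). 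This forces $1 \le \beta^2(k+2)$, i.e. $\beta^2 \ge 1/(k+2)$, contradicting $\beta^2 < 1/m \le 1/(k+2)$ since $m > k+1$ means $m \ge k+2$. Hence no such $v_1,\dots,v_{k+1}$ can exist on the good event.

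The event on which all this holds is the intersection of the high-probability event from Theorem~\ref{thm:altmain} (applied with parameter $2\delta$), so it still has asymptotic probability one; this is where the threshold $\gamma_{\sigma,\epsilon}$ and the additive slack $\epsilon + C_\sigma/\sqrt[4]{n}$ in the statement come from. The main technical point to be careful about --- and the only place real work hides --- is the quantitative linear-algebra step: extracting from ``$k+1$ unit vectors each within distance $\beta$ of a $k$-dimensional subspace, and pairwise orthogonal'' a sharp-enough lower bound on $\beta$. The crude Gram-matrix/rank argument above gives $\beta^2 \ge 1/(k+2)$, which is exactly matched by the hypothesis on $\delta$ with room to spare thanks to the factor $1/m$ with $m > k+1$; if one wanted the cleanest constants one could instead diagonalize $P$ restricted to $\textrm{span}(v_j)$ and use that the sum of its $k+1$ eigenvalues is at most $k$ (trace bound), so the smallest is at most $k/(k+1)$, while each $\norm{P v_j}^2 \ge 1-\beta^2$ forces $1-\beta^2 \le k/(k+1)$ after a suitable averaging --- both routes are routine and I would pick whichever makes the constant bookkeeping against $\frac1m(1 - d(\Psi_{|W},\Phi)^2/d(\Psi,\Phi)^2)$ most transparent.
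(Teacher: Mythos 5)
Your proof is correct and follows the same overall strategy as the paper's: invoke Theorem~\ref{thm:altmain} with parameter $2\delta$ to conclude $\norm{\textit{Proj}_W(v_j)} \le \beta := 2\delta\,d(\Psi,\Phi)/\sqrt{d(\Psi,\Phi)^2 - d(\Psi_{|W},\Phi)^2}$ for each $j$, verify $\beta^2 < 1/m$ from the hypothesis on $\delta$, and then derive a contradiction because $k+1$ orthonormal vectors cannot all be almost orthogonal to $W = U^\perp$.

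The only real difference is in how you package the linear-algebra step. The paper observes that $\dim\big(\mathrm{span}(v_1,\dots,v_{k+1})\big) + \dim(W) = p+1$, so there is a unit vector $s$ in the intersection, and then applies a one-line Cauchy--Schwarz bound (their Lemma~\ref{lem:emptyint} with $g=h=s$) to get $1 \le \sum_j \norm{\textit{Proj}_W(v_j)}^2 \le (k+1)\beta^2 < (k+1)/m < 1$. Your Gram-matrix argument is the same fact in dual form: if $a$ is a unit vector in the kernel of the Gram matrix of $Pv_1,\dots,Pv_{k+1}$, then $\sum_j a_j P v_j = 0$, so $s := \sum_j a_j v_j$ lies in $\ker P = W$ and is automatically a unit vector in $G\cap W$ --- exactly the paper's $s$. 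Your quadratic-form estimate $0 = a^\top G a \ge 1 - (k+1)\beta^2$ (with the tight $\sum_{i\ne j}|a_i||a_j| \le k$) recovers the paper's bound $\beta^2 \ge 1/(k+1)$; the cruder $\le k+1$ gives $\beta^2 \ge 1/(k+2)$, which still contradicts $\beta^2 < 1/m \le 1/(k+2)$ since $m \ge k+2$. So the two routes are equivalent; the paper's phrasing is slightly more economical, but yours is a perfectly valid restatement and the constant bookkeeping works out.
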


Continuing the discussion prior to the corollary, we consider $\delta$ such that $d_{min}(\Psi_{|U},\Phi) = \sqrt{1-\delta^2} d(\Psi,\Phi)$. Note that when $k+1 < \frac{1}{4}\frac{d(\Psi,\Phi)^2 - d(\Psi_{|W},\Phi)^2}{d(\Psi, \Phi)^2 - d_{min}(\Psi_{|U},\Phi)^2} = \frac{SNR^2}{4}$ the hypothesis of Corollary~\ref{thm:choosing-k} is true. This gives us a natural cutoff point for our sequential algorithm. We can stop at $\hat{k}$, if for $\epsilon = \frac{\delta^2 d(\Psi,\Phi)}{2}$, and $n$ large enough such that $\frac{4C_{\sigma}}{\sqrt[4]{n}} \leq  \delta^2 d(\Psi,\Phi)$ we have that 
    \[
        d_{W_2}\bigg(\frac{1}{n}\sum_{i=1}^n \delta_{\hat{v}_{\hat{k}+1}^TX_i}, \Phi\bigg) < \sqrt{1-4\delta^2} d(\Psi,\Phi) + \epsilon + \frac{C_{\sigma}}{\sqrt[4]{n}}. \]
Corollary~\ref{thm:choosing-k} then implies that this stopping rule ensures with high probability that $\hat{k} \leq k$. On the other hand, the discussion following Corollary~\ref{thm:k-col} means that the same stopping rule ensures that $\hat{k} \geq k$. In effect we have that if $k+1 < \frac{SNR^2}{4}$, then with high probability $\hat{k} = k$.

\section{Conclusion}
In this article, we have considered the problem of isolating a non-Gaussian independent component from a Gaussian counterpart under certain separability assumptions. We have theoretically analysed the approximation accuracy of a projection pursuit procedure. In contrast to more traditional procedures like PCA, we do not need the variances of the superfluous feature directions to be small. We only need a distributional gap between directions which are Gaussian and those which are not. 

Since the proposed method involves optimisation of the objective function $d_{W_2}\big(\frac{1}{n}\sum_{i=1}^n \delta_{v^\top X_i}, \Phi\big)$ as $v$ varies over the unit sphere, two natural questions immediately come to mind. First of all, since our objective function is markedly non-convex, designing an efficient algorithm that can find a global minimum (or even good local minima) would be a significant addition to present work. 

Secondly, it needs to be investigated if similar results are true for distances other than the $2$-Wasserstein distance. It is plausible that some distances would be more suitable both from a theoretical perspective and also the practical optimisation aspect. We leave the investigation of these questions for future work.

\section{Proofs} \label{proof}
\subsection{Proof of Proposition~\ref{thm:pconc-const}}
Quickly noting that for any vector $u \in \bbR^p$, with $\norm{u}=1$ we have, \[ P\left( \abs{u.(X-\mu)} \geq t \right) \leq P\left( \norm{X -\mu} \geq t \right),\]
we get the following simple proposition :
\begin{proposition}
    If $X \in \bbR^p$ is in $SG_p(\sigma)$ and $u \in \bbR^p$ be any unit norm vector (i.e. $\norm{u} = 1$), then $u^\top X \in SG_1(\sigma)$.
\end{proposition}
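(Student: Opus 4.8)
The plan is to obtain the claim directly from the displayed inequality $P\left(\abs{u^\top(X-\mu)} \geq t\right) \leq P\left(\norm{X-\mu} \geq t\right)$ stated just above the proposition, which is itself nothing more than the Cauchy--Schwarz inequality combined with $\norm{u}=1$. So the proof is essentially immediate; I would just spell out the three bookkeeping steps.

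First I would note that the one-dimensional variable $u^\top X$ has mean $u^\top\mu$, where $\mu = \bbE X$ is the mean featuring in the definition of $SG_p(\sigma)$; this is just linearity of expectation. Hence verifying $u^\top X \in SG_1(\sigma)$ amounts to controlling $P\left(\abs{u^\top X - u^\top\mu} \geq t\right) = P\left(\abs{u^\top(X-\mu)} \geq t\right)$ for all $t \geq 0$.

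Second, by Cauchy--Schwarz, $\abs{u^\top(X-\mu)} \leq \norm{u}\,\norm{X-\mu} = \norm{X-\mu}$ pointwise (using $\norm{u}=1$), so the event $\{\abs{u^\top(X-\mu)} \geq t\}$ is contained in $\{\norm{X-\mu} \geq t\}$. Taking probabilities and applying the hypothesis $X \in SG_p(\sigma)$ yields $P\left(\abs{u^\top X - u^\top\mu} \geq t\right) \leq P\left(\norm{X-\mu} \geq t\right) \leq C e^{-t^2/(2\sigma^2)}$, with the same absolute constant $C$ and the same parameter $\sigma$. This is precisely the defining tail bound for $SG_1(\sigma)$, which finishes the argument.

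I do not expect any genuine obstacle here. The only points deserving a word of care are purely organisational: that the correct centering for the projected variable is $u^\top\mu$ rather than $0$ (unless $X$ is already centred, as it is later under Assumption~\ref{ass:distr}), and that the subgaussian parameter $\sigma$ and the constant $C$ are inherited with no dimension-dependent loss — which is exactly what the constraint $\norm{u}=1$ buys us, since the Cauchy--Schwarz step then introduces no slack.
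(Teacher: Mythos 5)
Your argument is exactly the paper's: apply Cauchy--Schwarz with $\norm{u}=1$ to get the pointwise bound $\abs{u^\top(X-\mu)} \le \norm{X-\mu}$, hence the containment of events and the inherited tail bound with the same $\sigma$. You simply make explicit the centering bookkeeping that the paper leaves implicit; there is no gap.
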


The following is a simple proposition which bounds the norms of the sample covariance matrix of subgaussian random variables. It is a slightly reworded version of Theorem 6.5 of \cite{wainwright2019high}
\begin{proposition}
\label{lem:norm}
Let $X_1, ..., X_n$ be iid sample from a $\sigma$ subgaussian distribution in $\bbR^p$ with covariance matrix $\bbI$. Then there exists universal constants $c_1,c_2,c_3$ such that we have for all $\delta > 0$ that 
\[P\left( \norm{\frac{\sum X_iX_i^T}{n}}_2 \geq 1 + \sigma^2\left(c_1 \left(\sqrt{\frac{p}{n}} + \frac{p}{n} \right)  + \delta\right) \right) \leq c_2e^{-nc_3\min(\delta, \delta^2)}
\]    
\end{proposition}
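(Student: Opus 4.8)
The plan is to run the standard covering-net argument for the operator norm of a sum of i.i.d.\ rank-one subgaussian terms; since the displayed inequality is, as noted, merely a repackaging of Theorem~6.5 of \cite{wainwright2019high}, I will mainly indicate how the constants $c_1,c_2,c_3$ emerge. Write $\hat{\Sigma} = \frac{1}{n}\sum_{i=1}^n X_i X_i^\top$. Because $\hat{\Sigma} - \bbI$ is symmetric, its operator norm is controlled on a net: fix a $\frac14$-net $\mathcal{N}$ of $S_{p-1}$ with $\abs{\mathcal{N}} \le 9^p$, so that $\norm{\hat{\Sigma} - \bbI}_2 \le 2\sup_{v\in\mathcal{N}}\abs{v^\top(\hat{\Sigma}-\bbI)v}$. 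It thus suffices to control $\abs{\frac{1}{n}\sum_{i=1}^n(v^\top X_i)^2 - 1}$ for a fixed unit vector $v$ and then take a union bound over $\mathcal{N}$.

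Fix $v$ with $\norm{v}=1$. By the scalar subgaussianity fact recorded just above ($u^\top X\in SG_1(\sigma)$ whenever $X\in SG_p(\sigma)$), $v^\top X_i \in SG_1(\sigma)$, hence $(v^\top X_i)^2$ is sub-exponential with scale of order $\sigma^2$, while $\bbE(v^\top X_i)^2 = v^\top\bbI v = 1$ (using $\bbE X = 0$ and $\var(X) = \bbI$). Bernstein's inequality for i.i.d.\ sub-exponential summands then gives, for an absolute $c>0$,
\[
P\Big(\Big|\frac{1}{n}\sum_{i=1}^n (v^\top X_i)^2 - 1\Big| \ge \sigma^2 t\Big) \le 2e^{-cn\min(t,t^2)},
\]
and a union bound over $\mathcal{N}$ yields $P\big(\norm{\hat{\Sigma}-\bbI}_2 \ge 2\sigma^2 t\big) \le 2\cdot 9^p \, e^{-cn\min(t,t^2)}$.

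Finally, take $2\sigma^2 t = \sigma^2\big(c_1(\sqrt{p/n}+p/n) + \delta\big)$. The role of the term $c_1(\sqrt{p/n}+p/n)$ is to force $n\min(t,t^2)$ above $(\log 9 + 1)p$ once $c_1$ is chosen large enough relative to $c$ --- the $\sqrt{p/n}$ piece dominates in the quadratic branch, where $nt^2$ is of order $p$, the $p/n$ piece in the linear branch, where $nt$ is of order $p$ --- so that the $9^p$ factor is absorbed; the residual $\delta$ then leaves a tail $c_2 e^{-c_3 n\min(\delta,\delta^2)}$ after relabelling constants. Since $\norm{\hat{\Sigma}}_2 \le 1 + \norm{\hat{\Sigma}-\bbI}_2$, this is exactly the asserted inequality.

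The only point that needs genuine care is the bookkeeping between the two branches of Bernstein's bound: one must verify that the split $t = t_0 + \delta$, with $t_0 = c_1(\sqrt{p/n}+p/n)$, produces a single clean $\min(\delta,\delta^2)$ in the exponent uniformly over the relevant range of $p/n$, and that $c_1$ can be taken to depend only on the absolute constant $c$ (and $\log 9$), not on $p$, $n$, or $\delta$. Beyond the scalar subgaussian fact and a textbook Bernstein inequality nothing further is needed, so the argument is essentially a transcription of \cite[Theorem~6.5]{wainwright2019high}.
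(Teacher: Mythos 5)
Your argument is correct and is precisely the standard $\epsilon$-net plus Bernstein argument that underlies Theorem~6.5 of Wainwright (2019), which the paper cites without reproducing a proof. The bookkeeping you flag does go through cleanly via $\min(t_0+\tau,(t_0+\tau)^2)\ge\min(t_0,t_0^2)+\min(\tau,\tau^2)$ with $t_0=\tfrac{c_1}{2}(\sqrt{p/n}+p/n)$ and $\tau=\delta/2$, so that the first piece absorbs the $9^p$ covering-number factor once $c_1$ is an absolute constant depending only on the Bernstein constant and $\log 9$, and the second piece leaves the advertised $\min(\delta,\delta^2)$ tail.
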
 

We will also use the following result (for $p=2$) on Wasserstein distances and sample convergences found in \cite{bobkov2019one} as Corollary 7.17. 
\begin{proposition} \label{lem:sampleconverge}
    Let $p$ be some positive integer. Let $\mu$ be some distribution such that for some $s > p$ its $s$'th moment exists and is bounded. Then if $X_1, ..., X_n$ are iid random variables sampled from $\mu$, we have
     \[
        \bbE_{X_1, ..., X_n}\left[d_{W_p}\left(\frac{\Sigma_{i=1}^n \delta_{u^\top X_i}}{n}, \mu \right)^p\right] \leq \frac{C}{\sqrt{n}},
     \]
     where $C$ is some absolute constant dependent upon the upper bound of the $s$'th moment.
\end{proposition}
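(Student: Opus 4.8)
The plan is as follows. Although the statement is phrased with $u^\top X_i$, it is really a one-dimensional assertion: the proposition is invoked only with the one-dimensional law $\mu = u\sharp\Psi$, which, since $\Psi$ is $\sigma$-subgaussian, is itself $1$-dimensional $\sigma$-subgaussian and hence has all moments finite, uniformly over the direction $u$. So it suffices to show: if $Y_1,\dots,Y_n$ are i.i.d.\ from a one-dimensional law $\mu$ with $\bbE|Y_1|^s \le M$ for some $s>p$, and $\mu_n = \frac1n\sum_{i=1}^n\delta_{Y_i}$, then $\bbE\, d_{W_p}(\mu_n,\mu)^p \le C(p,s,M)/\sqrt n$; the constant then depends only on $M$ (and, in our application, can be taken uniform over $u$).

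The engine is the one-dimensional quantile representation of $W_p$. Writing $F$ for the c.d.f.\ of $\mu$ and $F^{-1}$ for its quantile function, optimal transport on $\bbR$ is monotone, so
\[ d_{W_p}(\mu_n,\mu)^p = \int_0^1 \big| F_n^{-1}(t) - F^{-1}(t)\big|^p\, dt, \qquad F_n^{-1}(t) = Y_{(\lceil nt\rceil)}. \]
To make the randomness transparent I would couple: realise $Y_i = F^{-1}(U_i)$ with $U_1,\dots,U_n$ i.i.d.\ uniform on $(0,1)$; since $F^{-1}$ is nondecreasing, the order statistics satisfy $Y_{(i)} = F^{-1}(U_{(i)})$, whence
\[ \bbE\, d_{W_p}(\mu_n,\mu)^p = \sum_{i=1}^n \int_{(i-1)/n}^{i/n} \bbE\, \big| F^{-1}(U_{(i)}) - F^{-1}(t)\big|^p\, dt. \]
Splitting $F^{-1}(U_{(i)}) - F^{-1}(t) = \big(F^{-1}(U_{(i)}) - F^{-1}(\tfrac{i}{n+1})\big) + \big(F^{-1}(\tfrac{i}{n+1}) - F^{-1}(t)\big)$ and using $(a+b)^p\le 2^{p-1}(a^p+b^p)$ reduces everything to controlling the $L^p$ modulus of $F^{-1}$ between $U_{(i)}$ (resp.\ $t$) and the anchor $\tfrac{i}{n+1}$.

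Next I would invoke the two standard facts about uniform order statistics: $U_{(i)}\sim\mathrm{Beta}(i,\,n+1-i)$, with mean $\tfrac{i}{n+1}$ and fluctuation scale $\sigma_i := \sqrt{i(n+1-i)}/(n+1)^{3/2}$, together with exponential/Beta tail bounds for $|U_{(i)} - \tfrac{i}{n+1}|$. For a \emph{bounded} quantile function the bound is immediate, since $\sigma_i\le 1$ gives $\frac1n\sum_i\sigma_i^p\le\frac1n\sum_i\sigma_i\asymp n^{-1/2}$ (because $\frac1{n^{3/2}}\sum_i\sqrt{i(n-i)} \to \int_0^1\sqrt{x(1-x)}\,dx$) and the interval lengths contribute a further factor $\tfrac1n$. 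The real work is that $F^{-1}$ may blow up near $0$ and $1$: on the event that $U_{(i)}$ falls in the tail of its Beta law I would bound $|F^{-1}(U_{(i)})|^p$ crudely and apply H\"older's inequality with exponents $s/p$ and $s/(s-p)$, so that the factor $\big(\int_0^1 |F^{-1}|^s\big)^{p/s} = (\bbE|Y_1|^s)^{p/s}$ is finite by hypothesis and the complementary factor, an integral of a power of the Beta density, is summable precisely because $s>p$. This is exactly where $s>p$ (rather than merely $s=p$) is used, and it is the step I expect to be the main obstacle: one must balance, block by block and especially for the $O(1)$ extreme blocks (smallest $i$ and $i$ nearest $n$, where the Beta law is most skewed and $F^{-1}$ least controlled), the polynomial gain from Beta concentration against the polynomial cost of an unbounded quantile function.

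Assembling the pieces—summing the per-block estimates over $i$ and integrating the within-block term $|F^{-1}(\tfrac{i}{n+1}) - F^{-1}(t)|^p$ over $t\in((i-1)/n,i/n]$ by the same H\"older device—every contribution is $O(1/\sqrt n)$ with a constant depending only on $p$, $s$ and $M = \sup_u \bbE|u^\top X_1|^s$, which gives the proposition. This is Corollary~7.17 of \cite{bobkov2019one}, so one may alternatively simply cite it; I have sketched the argument to make the dependence of $C$ on the moment bound—hence its uniformity over $u$ in our setting—explicit.
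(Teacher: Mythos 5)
The paper itself gives no proof of this proposition---it is a black-box citation of \cite{bobkov2019one}, Corollary~7.17, invoked with $p=2$---so what you have supplied is an independent sketch. Its architecture (the one-dimensional quantile formula $d_{W_p}^p(\mu_n,\mu)=\int_0^1|F_n^{-1}(t)-F^{-1}(t)|^p\,dt$, Beta concentration of $U_{(i)}$ about $i/(n+1)$, and a separate treatment of the edge blocks where $F^{-1}$ may be unbounded) is the right one and is in the same spirit as the order-statistics approach in \cite{bobkov2019one}; as you say, one could also simply cite the result.

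However, the step you single out as ``the main obstacle'' does not close as you describe, and this is not a technicality. You assert that the H\"older bookkeeping with exponents $s/p$ and $s/(s-p)$ works ``precisely because $s>p$''; in fact the exponent arithmetic forces $s>2p$, and $s>p$ alone does \emph{not} give an $n^{-1/2}$ rate. No rearrangement can rescue $s>p$: take $\mu$ on $\bbR$ with right tail $1-F(x)=x^{-s}$ for $x\geq1$, so that $F^{-1}(1-u)=u^{-1/s}$ and the proposition's hypothesis holds with moment exponent $s-\varepsilon>p$. On the block $t\in(1-2/n,\,1-1/n)$ one has $F_n^{-1}(t)=Y_{(n-1)}$ while $F^{-1}(t)$ ranges over $\big((n/2)^{1/s},\,n^{1/s}\big)$; since $U_{(n-1)}\sim\mathrm{Beta}(n-1,2)$ lies below $1-2/n$ with probability bounded away from $0$, on an event of constant probability one has $|F^{-1}(t)-Y_{(n-1)}|\gtrsim n^{1/s}$ on a positive fraction of this block of width $1/n$. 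That single block therefore contributes $\asymp n^{p/s-1}$ to $\bbE\,d_{W_p}^p(\mu_n,\mu)$, and $n^{p/s-1}\leq n^{-1/2}$ if and only if $s\geq 2p$; for $p<s<2p$ the rate is strictly slower than $n^{-1/2}$. The condition in the statement should read $s>2p$ (this is also the sharp moment condition in the one-dimensional Fournier--Guillin bound). The paper's downstream use is in fact consistent with this: the proof of Theorem~\ref{thm:nongaus} takes ``any $s>4$'' for $p=2$, and subgaussianity of $u^\top X$ furnishes moments of every order, so the correction is harmless to the paper's conclusions. But your H\"older step, as written, relies on the inadequate condition $s>p$ and cannot be carried out with the exponents you propose.
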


To prove Proposition~\ref{thm:pconc-const}, we will finally be needing the following lemma (proved in the Appendix in section~\ref{app:w_2-conc},) on the concentration of the 2-Wasserstein distance between a subgaussian measure $\mu$ and the empirical measure $\mu_n = \frac{1}{n}\sum_{i = 1}^n \delta_{X_i}$ of an i.i.d. sample $X_1, \ldots, X_n$ from $\mu$ (a similar result with a log-Sobolev assumption on $\mu$ appears as Theorem~7.1 in \cite{bobkov2019one}).
\begin{lemma}\label{lem:w_2-conc}
Let $\mu$ be a $\sigma$-subgaussian  measure. Let $\mu_n$ be the empirical measure formed from an i.i.d. sample of size $n$ from $\mu$. Then \[
    \bbP\big(\abs{d_{W_2}(\mu_n, \mu) - \bbE d_{W_2}(\mu_n, \mu)} \geq t \big) \leq Ce^{-\frac{cnt^2}{\sigma^2}}.
\]
for some absolute constants $C, c > 0$.
\end{lemma}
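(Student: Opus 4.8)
\textbf{Proof proposal for Lemma~\ref{lem:w_2-conc}.}

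The plan is to exhibit $d_{W_2}(\mu_n,\mu)$ as a Lipschitz function of the i.i.d.\ sample and then apply a \emph{dimension-free} concentration inequality for Lipschitz functions, which is available because $\mu$, being $\sigma$-subgaussian, satisfies a transportation inequality of order one ($T_1$). Concretely, set $F(x_1,\dots,x_n) \coloneqq d_{W_2}\big(\tfrac1n\sum_{i=1}^n\delta_{x_i},\,\mu\big)$, viewed as a function on the product space $(\bbR^p)^n$, so that $d_{W_2}(\mu_n,\mu)=F(X_1,\dots,X_n)$; the mean $\bbE F$ is finite by Proposition~\ref{lem:sampleconverge} together with Jensen's inequality, since a subgaussian measure has all moments.

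\emph{Step 1 (Lipschitz bound).} For $x=(x_1,\dots,x_n)$ and $y=(y_1,\dots,y_n)$, assigning mass $1/n$ to each of the pairs $(x_i,y_i)$ yields a coupling of $\tfrac1n\sum_i\delta_{x_i}$ and $\tfrac1n\sum_i\delta_{y_i}$, so $d_{W_2}\big(\tfrac1n\sum_i\delta_{x_i},\tfrac1n\sum_i\delta_{y_i}\big)^2 \leq \tfrac1n\sum_i\norm{x_i-y_i}^2 = \tfrac1n\norm{x-y}^2$. Combined with the triangle inequality for $d_{W_2}$ this gives $\abs{F(x)-F(y)}\leq d_{W_2}\big(\tfrac1n\sum_i\delta_{x_i},\tfrac1n\sum_i\delta_{y_i}\big)\leq \tfrac{1}{\sqrt n}\norm{x-y}$, i.e.\ $F$ is $n^{-1/2}$-Lipschitz on $(\bbR^p)^n$ in the Euclidean metric.

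\emph{Step 2 (concentration).} Since $\mu\in SG_p(\sigma)$ it has a finite square-exponential moment $\int e^{\alpha\norm{x}^2}\,d\mu(x)<\infty$ with $\alpha$ of order $\sigma^{-2}$; hence $\mu$ satisfies a $T_1$ transportation inequality with constant of order $\sigma^2$, and by tensorization $\mu^{\otimes n}$ satisfies $T_1$ with the same constant on $(\bbR^p)^n$ with its Euclidean metric. Because a $T_1$ inequality is equivalent to Gaussian-type concentration of all Lipschitz functions, for every $L$-Lipschitz $G$ one has $\mu^{\otimes n}\big(\abs{G-\bbE G}\geq t\big)\leq C\exp\big(-c\,t^2/(\sigma^2L^2)\big)$; applying this to $G=F$ with $L=n^{-1/2}$ yields $\bbP\big(\abs{d_{W_2}(\mu_n,\mu)-\bbE d_{W_2}(\mu_n,\mu)}\geq t\big)\leq C\exp\big(-cnt^2/\sigma^2\big)$, as claimed. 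A more self-contained variant replaces this step by a Doob-martingale argument along the filtration generated by $X_1,\dots,X_n$: the increment of $\bbE[F\mid X_1,\dots,X_k]$ is a centred, $n^{-1/2}$-Lipschitz function of the single $\sigma$-subgaussian variable $X_k$, hence $O(\sigma/\sqrt n)$-subgaussian conditionally, and the subgaussian Azuma inequality applied to the sum of $n$ such increments again gives the $\exp(-cnt^2/\sigma^2)$ bound.

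The main obstacle is Step~2: one must ensure that the constant in the transportation inequality (equivalently, the conditional subgaussian parameter of the martingale increments) is genuinely dimension-free and scales like $\sigma^2$. The tempting shortcut of conditioning on the high-probability event $\{\max_i\norm{X_i}\lesssim\sigma\sqrt{\log n}\}$ and then invoking bounded differences (McDiarmid) fails to deliver the stated rate: it produces a bound of the form $\exp(-ct^2/(\sigma^2\log n))$, with the wrong $n$-dependence and a spurious logarithmic factor. Step~1, by contrast, is elementary, and the remaining passage from the tail bound to the stated constants $C,c$ is routine bookkeeping.
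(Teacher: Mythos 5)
Step~1 of your proposal (the $n^{-1/2}$-Lipschitz bound for $F(x_1,\dots,x_n)=d_{W_2}\big(\tfrac1n\sum_i\delta_{x_i},\mu\big)$ with respect to the Euclidean metric on the sample) is correct and standard. Step~2, however, contains a genuine error, and it is exactly the place you yourself flag as ``the main obstacle.'' The claim that tensorization gives $\mu^{\otimes n}$ a $T_1$ inequality \emph{with the same constant for the Euclidean product metric} is false. What Marton-type tensorization of $T_1$ gives is a $T_1$ inequality on the product with constant of order $n\sigma^2$ (for the additive $\ell^1$ cost, and a fortiori for the $\ell^2$ cost after the inequality $\|\cdot\|_2\le\|\cdot\|_1$). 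Plugging $L=n^{-1/2}$ into the resulting concentration bound $\exp(-t^2/(2n\sigma^2 L^2))$ gives $\exp(-t^2/(2\sigma^2))$, with no factor of $n$ in the exponent. The same arithmetic problem defeats your martingale variant: $n$ increments, each conditionally $O(\sigma n^{-1/2})$-subgaussian, yield by Azuma a total subgaussian parameter $\sqrt{n}\cdot\sigma n^{-1/2}=\sigma$, so again $\exp(-t^2/\sigma^2)$, not $\exp(-nt^2/\sigma^2)$. Dimension-free Gaussian concentration for Euclidean-Lipschitz functions under $\mu^{\otimes n}$ is, by Gozlan's theorem, \emph{equivalent} to $\mu$ satisfying $T_2$, which is strictly stronger than $T_1$ (equivalently, strictly stronger than the square-exponential moment/subgaussian tail that you invoke). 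Mere subgaussianity does not deliver it: the symmetric two-point measure $\mu=\tfrac12(\delta_0+\delta_1)$ is bounded, hence $\sigma$-subgaussian for a fixed $\sigma$, but has $d_{W_2}(\mu_n,\mu)=|k/n-1/2|^{1/2}$ with $k\sim\mathrm{Bin}(n,1/2)$, whose fluctuations around its mean are of order $n^{-1/4}$, not $n^{-1/2}$; this is incompatible with the bound $C e^{-cnt^2/\sigma^2}$ for all $t$. So the gap is not ``routine bookkeeping'' --- there is no correct version of Step~2 that rests on $T_1$/subgaussianity alone.

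For comparison, the paper does not argue via $T_1$ or martingales at all. It proves a Gaussian bound on the isoperimetric concentration function $\alpha_\mu$ of the one-dimensional law (Lemma~\ref{lem:subg-conc-func}) and then invokes the proof of Theorem~7.1 of Bobkov--Ledoux, which is stated there under a log-Sobolev assumption; the paper's claim is that the subgaussian concentration-function bound can be substituted in place of LSI. Whether that substitution is actually sufficient for the $n$ in the exponent is a separate question (the two-point example above should give the referee of the paper pause as well), but in any case the paper's route through the concentration function is genuinely different from the transportation/martingale route you propose, and your route as written does not reach the claimed rate.
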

Now armed with the above preliminaries we can move onto proving our central results:
\begin{proof} [Proof of Proposition~\ref{thm:pconc-const}] As mentioned before we begin by using Lemma~\ref{lem:w_2-conc} and the hypothesis that $u^\top X$ is $\sigma$-subgaussian to get that for any fixed $u$ (with $\norm{u}=1$) we have\[P\left(\abs{d_{W_2}\left(\frac{\Sigma_{i=1}^n \delta_{u^\top X_i}}{n}, u\sharp\Psi \right) - \bbE_{X_1, ..., X_n} \left[d_{W_2}\left(\frac{\Sigma_{i=1}^n \delta_{u^\top X_i}}{n}, u\sharp\Psi \right)\right]} \geq t\right) \leq A'e^{-\frac{cn\epsilon^2}{\sigma^2}}, \]
where the constants $A',c$ are absolute constants

Let $E_{p,\delta}$ be the smallest delta net on the unit sphere in $\bbR^p$, i.e. given any $u \in \bbS_{p-1}$, $\exists$ a $v \in E_{p,\delta}$ such that $\norm{u-v} \leq \delta$. It is known that there exists such a net for any $p$ such that $\abs{E_{p,\delta}} \leq A''\frac{1}{\delta^p}$. Then we have that \[P\left(\underset{u \in E_{p,\delta}}{\sup} \abs{d_{W_2}\left(\frac{\Sigma_{i=1}^n \delta_{u^\top X_i}}{n}, u\sharp\Psi \right) - \bbE_{X_1, ..., X_n} \left[d_{W_2}\left(\frac{\Sigma_{i=1}^n \delta_{u^\top X_i}}{n}, u\sharp\Psi \right)\right]} \geq \epsilon\right) \leq \left(\frac{1}{\delta}\right)^pAe^{-\frac{cn\epsilon^2}{\sigma^2}}.\]

To go from taking the supremum over the net to that on the entire sphere then we would have to control how small changes in $u$ affect the quantity of interest.  To that end let $u \in \bbS_{p-1}$. Let $v \in \bbE_{p,\delta}$ such that $\norm{u-v} \leq \delta$. Then we have 
\begin{align*}
    \abs{d_{W_2}\left(\frac{\Sigma_{i=1}^n \delta_{u^\top X_i}}{n}, u\sharp\Psi \right) - d_{W_2}\left(\frac{\Sigma_{i=1}^n \delta_{v^\top X_i}}{n}, v.\Psi \right)} &\leq \abs{d_{W_2}\left(\frac{\Sigma_{i=1}^n \delta_{u^\top X_i}}{n}, u\sharp\Psi \right) - d_{W_2}\left(\frac{\Sigma_{i=1}^n \delta_{v^\top X_i}}{n}, u\sharp\Psi \right)} \\ &+ \abs{d_{W_2}\left(\frac{\Sigma_{i=1}^n \delta_{v^\top X_i}}{n}, u\sharp\Psi \right) - d_{W_2}\left(\frac{\Sigma_{i=1}^n \delta_{v^\top X_i}}{n}, v.\Psi \right)} \\
    &\leq \abs{d_{W_2}\left(\frac{\Sigma_{i=1}^n \delta_{u^\top X_i}}{n}, \frac{\Sigma_{i=1}^n \delta_{v^\top X_i}}{n} \right)} + \abs{d_{W_2}\left(u\sharp\Psi, v.\Psi \right)}
\end{align*}

Let us then quickly bound both of the two terms above. The second term can be upper bounded as 
\begin{align*}
    \abs{d_{W_2}\left(u\sharp\Psi, v.\Psi\right)}^2 &= \inf_{\substack{(X,X') : \textit{ the marginals } \\ X \textit{ and } X' \textit{ distributed as } \Psi}} \bbE[(u^\top X - v^\top X')^2] \\
    &\leq \bbE_{X \sim \Psi}[(u^\top X - v^\top X)^2] & (\textit{Considering the coupling } X = X') \\
    &= (u-v)^T \bbE[XX^T] (u-v) \\
    &\leq \delta^2. &(\textit{as } \bbE[XX^T] = \bbI_p)
\end{align*}

Similarly for the first term we note 
\begin{align*}
    \abs{d_{W_2}\left(\frac{\Sigma_{i=1}^n \delta_{u^\top X_i}}{n}, \frac{\Sigma_{i=1}^n \delta_{v^\top X_i}}{n} \right)}^2 &\leq \sum_{i=1}^n \frac{(u^\top X_i - v^\top X_i)^2}{n} \\
    &= (u-v)^T \left(\frac{1}{n}\sum_{i=1}^n X_iX_i^T \right) (u-v) \\
    &\leq \left(\delta \norm{\hat{\Sigma}_n}_2\right)^2,
\end{align*}
where $\hat{\Sigma}_n$ is the sample covariance matrix and $\norm{.}_2$ denotes the operator or $L_2$ norm. We will now use a technical claim that the operator norm of the sample covariance matrix is with high probability smaller than $2 + C'\sigma^2\left(\sqrt{\frac{p}{n}} +\frac{p}{n}\right)$, for some universal constant $C'$. This follows from invoking Proposition~\ref{lem:norm}.

Then we can condition on this event as this is true with high probability ($1-e^{-\theta n}$). Thus we have whp 
\[\abs{d_{W_2}\left(\frac{\Sigma_{i=1}^n \delta_{u^\top X_i}}{n}, \frac{\Sigma_{i=1}^n \delta_{v^\top X_i}}{n} \right)} \leq \delta \left(3 + C'\sigma^2\left(\sqrt{\frac{p}{n}}+\frac{p}{n}\right)\right).\] 
Combining everything then we have whp \[\abs{d_{W_2}\left(\frac{\Sigma_{i=1}^n \delta_{u^\top X_i}}{n}, u\sharp\Psi \right) - d_{W_2}\left(\frac{\Sigma_{i=1}^n \delta_{v^\top X_i}}{n}, v.\Psi \right)} \leq \delta \left(3 + C'\sigma^2\left(\sqrt{\frac{p}{n}}+\frac{p}{n}\right)\right).\]

We then have that whenever there exists an $u \in \bbS_{p-1}$ with 
\[
\abs{d_{W_2}\left(\frac{\Sigma_{i=1}^n \delta_{u^\top X_i}}{n}, u\sharp\Psi \right) - \bbE_{X_1, ..., X_n} \left[d_{W_2}\left(\frac{\Sigma_{i=1}^n \delta_{u^\top X_i}}{n}, u\sharp\Psi \right)\right]} \geq \epsilon + 2\delta \left(3 + C'\sigma^2\left(\sqrt{\frac{p}{n}}+\frac{p}{n}\right)\right),
\] 
there exists a $v \in E_{p,\delta}$ whp (with the property that $\norm{u-v}\leq \delta$) such that $$\abs{d_{W_2}\left(\frac{\Sigma_{i=1}^n \delta_{v^\top X_i}}{n}, u\sharp\Psi \right) - \bbE_{X_1, ..., X_n} \left[d_{W_2}\left(\frac{\Sigma_{i=1}^n \delta_{u^\top X_i}}{n}, u\sharp\Psi \right)\right]} \geq \epsilon.$$

Thus using the probability bound on the delta net gives us 
    \begin{align*}&P\left(\underset{u \in \bbS_{p-1}}{\sup} \abs{d_{W_2}\left(\frac{\Sigma_{i=1}^n \delta_{u^\top X_i}}{n}, u\sharp\Psi \right) - \bbE_{X_1, ..., X_n} \left[d_{W_2}\left(\frac{\Sigma_{i=1}^n \delta_{u^\top X_i}}{n}, u\sharp\Psi \right)\right]} \geq \epsilon + 2\delta  \left(3 + C'\sigma^2\sqrt{\frac{p}{n}}+C'\sigma^2\frac{p}{n} \right)\right) \\ 
    &\leq P\left(\underset{v \in E_{p,\delta}}{\sup} \abs{d_{W_2}\left(\frac{\Sigma_{i=1}^n \delta_{v^\top X_i}}{n}, v.\Psi \right) - \bbE_{X_1, ..., X_n} \left[d_{W_2}\left(\frac{\Sigma_{i=1}^n \delta_{v^\top X_i}}{n}, v.\Psi \right)\right]} \geq \epsilon\right) + e^{-\theta n}\\
    &\leq  \left(\frac{1}{\delta}\right)^pAe^{-\frac{cn\epsilon^2}{\sigma^2}} + e^{-\theta n} = Ae^{-\frac{cn\epsilon^2}{\sigma^2} - p\log \delta} + e^{-\theta n},
    \end{align*}
    where we get the extra $\frac{p}{n}$ term as it is no longer true that $\frac{p}{n} \ll \sqrt{\frac{p}{n}}$ (and from invoking Proposition~\ref{lem:norm})
    Using the hypothesis that $\frac{p}{n} \rightarrow \gamma$ and choosing a $\delta$ such that \[\delta = \frac{\epsilon}{2\left(3 + C'\sigma^2\sqrt{\gamma} + C'\sigma^2\gamma\right)},\]
    we get :
    \begin{align*}
    &P\left(\underset{u \in \bbS_{p-1}}{\sup} \abs{d_{W_2}\left(\frac{\Sigma_{i=1}^n \delta_{u^\top X_i}}{n}, u\sharp\Psi \right) - \bbE_{X_1, ..., X_n} \left[d_{W_2}\left(\frac{\Sigma_{i=1}^n \delta_{u^\top X_i}}{n}, u\sharp\Psi \right)\right]} \geq 2\epsilon\right) \\
    &\leq A\exp\left(-\frac{cn\epsilon^2}{\sigma^2} - p \log \frac{\epsilon}{6 + 2C'\sigma^2\sqrt{\gamma} + 2C'\sigma^2\gamma }\right) + e^{-\theta n} \\
    &\rightarrow A\exp -n\left(\frac{c\epsilon^2}{\sigma^2} + \gamma \log \frac{\epsilon}{6 + 2C'\sigma^2\sqrt{\gamma} + 2C'\sigma^2\gamma }\right) + e^{-\theta n}
    \end{align*}

    As there is some constant $\gamma_{\sigma,t}$ such that for all $\gamma \leq \gamma_{\sigma,\epsilon}$, $\frac{c\epsilon^2}{\sigma^2} + \gamma \log \frac{\epsilon}{6 + 2C'\sigma^2\sqrt{\gamma} + 2C'\sigma^2\gamma }$ is positive and lower bounded, the proof follows.
\end{proof}

\subsection{Proof of Theorem~\ref{thm:nongaus}}
\begin{proof}[Proof of Theorem~\ref{thm:nongaus}]         
    Note that Proposition~\ref{thm:pconc-const} probabilistically bounds the difference between $d_{W_2}\left(\frac{\Sigma_{i=1}^n \delta_{u^\top X_i}}{n}, u\sharp\Psi\right)$ and the deterministic quantity $\bbE_{X_1, ..., X_n}\left[d_{W_2}\left(\frac{\Sigma_{i=1}^n \delta_{u^\top X_i}}{n}, u\sharp\Psi\right)\right]$ as $u$ varies over all possible unit vectors. Define $\gamma_{\sigma, \epsilon}$ as in Proposition~\ref{thm:pconc-const}. Thus when $\gamma \leq \gamma_{\sigma, \epsilon}$ we have with high probability for all unit vectors $u \in \bbR^p$ simultaneously that  
    \begin{align} \label{eq:pconc-1}
    \abs{d_{W_2}\left(\frac{\Sigma_{i=1}^n \delta_{u^\top X_i}}{n}, u\sharp\Psi\right) - \bbE_{X_1, ..., X_n}\left[d_{W_2}\left(\frac{\Sigma_{i=1}^n \delta_{u^\top X_i}}{n}, u\sharp\Psi\right)\right]}  \leq \epsilon           
    \end{align} 

    Since $u\sharp\Psi$ is $\sigma$-subgaussian we can conclude that all its moments are upper bounded (by a suitable function of $\sigma$). Choose any $s > 4$ and combine the upper bound on $s$'th moment of $u^\top X$ with Proposition~\ref{lem:sampleconverge} (Corollary 7.17 of \cite{bobkov2019one}).  We then get that there is a constant $C_{\sigma}$, dependent on $\sigma$, such that  
    \begin{align}\label{eq:pconc-2}
    \bbE\left[d_{W_2}\left(\frac{\Sigma_{i=1}^n \delta_{u^\top X_i}}{n}, u\sharp\Psi \right)\right]^2 \leq \bbE\left[d_{W_2}\left(\frac{\Sigma_{i=1}^n \delta_{u^\top X_i}}{n}, u\sharp\Psi \right)^2\right] \leq \frac{C_{\sigma}^2}{\sqrt{n}}.        
    \end{align}

       Combining equations \ref{eq:pconc-1} and \ref{eq:pconc-2} with the triangle inequality we can write with high probability that
    \begin{align*} 
        \abs{d_{W_2}\left(\frac{\Sigma_{i=1}^n \delta_{u^\top X_i}}{n}, \Phi\right) - d_{W_2}\left(u\sharp\Psi, \Phi\right)} &\leq d_{W_2}\left(\frac{\Sigma_{i=1}^n \delta_{u^\top X_i}}{n}, u\sharp\Psi\right) \\
        &\leq \epsilon + \bbE\left[d_{W_2}\left(\frac{\Sigma_{i=1}^n \delta_{u^\top X_i}}{n}, u\sharp\Psi\right)\right] \leq \epsilon + \frac{C_{\sigma}^2}{\sqrt{n}}.
    \end{align*}
\end{proof}

\subsection{Proof of Theorem~\ref{thm:altmain}}
\begin{proof}[Proof of Theorem~\ref{thm:altmain}] 
Let $u = \alpha v + \sqrt{1-\alpha^2} w$, where $w \in W$, $v \in W^{\perp}$ and $\norm{w} = \norm{v} = 1$. Let $X$ be a random variable distributed as $\Psi$. 
Thus $u^\top X = \alpha v^\top X + \sqrt{1 - \alpha^2} w^\top X$, where $v^\top X$ and $w^\top X$ are independent random variables following the distributions $v.\Psi$ and $w\sharp\Psi$ respectively. In other words if $Y_1$ and $Y_2$ are two independent random variables from the distributions $v.\Psi$ and $w\sharp\Psi$ respectively we can write \begin{align} \label{eq:couplingXYZ}
    u^\top X \eqd \alpha Y_1 + \sqrt{1-\alpha^2} Y_2,
\end{align} where $\eqd$ means equal in distribution. Note then that if $Z$ is a random variable distributed as $\Phi$(i.e.  $N(0,1)$), and $Z_2, Z_3$ are two iid copies distributed as $\Phi$, we can also write \begin{align} \label{eq:couplingZZZ}
  Z \eqd \alpha Z_1 + \sqrt{1-\alpha^2} Z_2.  
\end{align}
Let $\Omega$,  be the set of all possible couplings of the distributions of $u\sharp\Psi$ and $\Phi$. Similarly, let $\Omega_1$ (resp. $\Omega_2$) be the set of all possible couplings of the distributions of $v.\Psi$ and $\Phi$ (resp. $w\sharp\Psi$ and $\Phi$). Then there is a natural way to construct a coupling in $\Omega$ given a coupling in $\Omega_1$ and another in $\Omega_2$. That given any joint distribution $\mu$ in $\Omega_1$ whose marginals are $v.\Psi$ and $\Phi$ respectively, define $Y_1$ and $Z_1$ be the corresponding marginal random variables i.e. $(Y_1,Z_1) \sim \mu$. Similarly given any joint distribution $\nu$ in $\Omega_2$, we can define random variables $Y_2, Z_2$ where $(Y_2,Z_2) \sim \nu.$ Note that by construction we can keep the pair $(Y_1,Z_1)$ independent of $(Y_2,Z_2)$. Then equations \ref{eq:couplingXYZ} and \ref{eq:couplingZZZ} can be used to define a joint distribution in $\Omega$.
We then derive the following inequality, 
\begin{align*}
    d_{W_2}\left(u\sharp\Psi, \Phi\right)^2 &=  \underset{(u^\top X,Z) \in \Omega}{inf} \bbE \left[(u^\top X-Z)^2 \right] \\
    &\leq \underset{(Y_1,Z_1) \in \Omega_1 \textit{ and } (Y_2,Z_2) \in \Omega_2}{inf} \bbE \left[(\alpha Y_1 + \sqrt{1-\alpha^2} Y_2 - \alpha Z_1 - \sqrt{1-\alpha^2} Z_2)^2 \right] \\
    &= \underset{(Y_1,Z_1) \in \Omega_1 \textit{ and } (Y_2,Z_2) \in \Omega_2}{inf} \alpha^2\bbE \left[(Y_1 - Z_1 )^2 \right] + (1-\alpha^2)\bbE \left[(Y_2 -  Z_2)^2 \right] + 2\alpha\sqrt{1-\alpha^2}\bbE \left[(Y_1- Z_1)(Y_2 -  Z_2) \right] \\
    &= \underset{(Y_1,Z_1) \in \Omega_1 \textit{ and } (Y_2,Z_2) \in \Omega_2}{inf} \alpha^2\bbE \left[(Y_1 - Z_1 )^2 \right] + (1-\alpha^2)\bbE \left[(Y_2 -  Z_2)^2 \right] \\ 
    &= \alpha^2 \underset{(Y_1,Z_1) \in \Omega_1}{inf} \bbE \left[(Y_1 - Z_1)^2 \right]  +  (1-\alpha^2)\underset{(Y_2,Z_2) \in \Omega_2}{inf}\bbE \left[(Y_2 -  Z_2)^2 \right]\\
    &= \alpha^2 d_{W_2} (v.\Psi, \Phi)^2 + (1-\alpha^2)d_{W_2} (w\sharp\Psi, \Phi)^2\\
    &\leq \alpha^2 d(\Psi, \Phi)^2 + (1-\alpha^2)d(\Psi_{|W}, \Phi)^2.
\end{align*}
Rewriting the above we get
\[\norm{\textit{Proj}_{W^{\perp}}(u)}^2 = \alpha^2 \geq \frac{d_{W_2}\left(u\sharp\Psi, \Phi\right) - d (\Psi_{|W}, \Phi)^2}{d(\Psi, \Phi)^2 - d(\Psi_{|W}, \Phi)^2}. \]

 Using Theorem~\ref{thm:nongaus} and the hypothesis, we have for an appropriate $\gamma_{\sigma,\epsilon}$ and $C_{\sigma}$ that $d_{W_2}(u\sharp\Psi, \Phi) \geq \sqrt{(1-\delta^2)}d(\Psi,\Phi)$
 \[\norm{\textit{Proj}_{W^{\perp}}(u)}^2 \geq \frac{d_{W_2}\left(u\sharp\Psi, \Phi\right) - d (\Psi_{|W}, \Phi)^2}{d(\Psi, \Phi)^2 - d(\Psi_{|W}, \Phi)^2} \geq\frac{(1-\delta^2)d(\Psi,\Phi)^2 - d (\Psi_{|W}, \Phi)^2}{d(\Psi, \Phi)^2 - d(\Psi_{|W}, \Phi)^2}.\]

 Finally, we get 
 $\norm{\textit{Proj}_{W}(u)} = \sqrt{1-\norm{\textit{Proj}_{W^{\perp}}(u)}^2} \leq \delta\frac{d(\Psi,\Phi)}{\sqrt{d(\Psi, \Phi)^2 - d(\Psi_{|W}, \Phi)^2}}.$
\end{proof}

\newpage
\bibliographystyle{alpha}
\bibliography{wassersten-colt}

\appendix
\section{Proof of Corollary~\ref{thm:k-col}}\label{app:cor_k-col}
Here we restate Corollary~\ref{thm:k-col} for convenience.
\begin{corollary*}
    Let $U$ be a $k$-dimensional sub-space of $\bbR^p$, where $k$ is a constant. Let $X_1, ..., X_n$ be $n$ data points from $\Psi$. Let $l < k$ be some integer. Let $v_1 , ..., v_l$ be some vectors in $\bbR^p$. Then there exists some constant $C_{\sigma}$, depending on $\sigma$ such that given $\epsilon > 0$ there exists with high probability a unit vector, $u$ which is orthonormal to all $v_i$ such that 
        \[
    d_{W_2}\left(\frac{\Sigma_{i=1}^n \delta_{u^\top X_i}}{n}, \Phi\right) \geq d_{min}(\Psi_{|U},\Phi) - \epsilon -  \frac{C_{\sigma}}{\sqrt[4]{n}}.
    \]
\end{corollary*}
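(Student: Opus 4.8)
The plan is to reduce the statement to a purely deterministic linear-algebra observation about the subspace $U$, combined with the uniform approximation guarantee of Theorem~\ref{thm:nongaus}.

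First I would produce an admissible direction that lies inside the signal subspace $U$. Since $\dim U = k$ while $v_1,\ldots,v_l$ span a subspace of dimension at most $l < k$, the intersection of $U$ with the orthogonal complement of $\mathrm{span}\{v_1,\ldots,v_l\}$ has dimension at least $k - l \geq 1$. I would pick any unit vector $u$ in this intersection; by construction $u^\top v_i = 0$ for every $i$, so $u$ is of the required form. The key point this step exploits is that, because $l<k$, the constrained maximisation can still be carried out \emph{within} $U$.

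Next I would lower bound $d_{W_2}(u\sharp\Psi,\Phi)$ for this particular $u$. Since $u \in U$, I can write $u = Qa$, where $Q$ is a matrix whose columns form an orthonormal basis of $U$ and $a \in \bbR^k$ is a unit vector; then $u^\top X = a^\top (Q^\top X)$, so $u\sharp\Psi = a\sharp\Psi_{|U}$, and therefore $d_{W_2}(u\sharp\Psi,\Phi) = d_{W_2}(a\sharp\Psi_{|U},\Phi) \geq d_{min}(\Psi_{|U},\Phi)$ straight from the definition of $d_{min}$. Finally I would transfer this to the empirical projection: Theorem~\ref{thm:nongaus} (applicable once $p/n \to \gamma \leq \gamma_{\sigma,\epsilon}$) gives, with high probability and simultaneously over all unit vectors,
\[
d_{W_2}\left(\frac{1}{n}\sum_{i=1}^n \delta_{u^\top X_i},\Phi\right) \geq d_{W_2}(u\sharp\Psi,\Phi) - \epsilon - \frac{C_\sigma}{\sqrt[4]{n}},
\]
and chaining this with the previous bound at the chosen $u$ yields exactly the claimed inequality.

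I do not expect a serious obstacle: essentially all of the work is already done by Theorem~\ref{thm:nongaus}. The only two points that need care are the dimension count $\dim\big(U \cap \mathrm{span}\{v_1,\ldots,v_l\}^\perp\big) \geq k-l$, which guarantees that an admissible direction exists inside $U$, and the identity $u\sharp\Psi = a\sharp\Psi_{|U}$ for $u \in U$, which is what pins the lower bound to $d_{min}(\Psi_{|U},\Phi)$ rather than to a smaller quantity.
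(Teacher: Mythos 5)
Your linear-algebra part (pick a unit $u\in U\cap\mathrm{span}\{v_1,\ldots,v_l\}^\perp$ via the dimension count, then observe $u\sharp\Psi = a\sharp\Psi_{|U}$ so that $d_{W_2}(u\sharp\Psi,\Phi)\geq d_{\min}(\Psi_{|U},\Phi)$) is identical to the paper's, and your recognition that the bound must hold uniformly (since $v_1,\ldots,v_l$, and hence $u$, may be data-dependent) is correct. However, there is a genuine discrepancy in how you obtain the uniform bound.

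You invoke Theorem~\ref{thm:nongaus}, which is only available under the hypothesis $p/n \to \gamma \le \gamma_{\sigma,\epsilon}$; you even flag this explicitly. But the Corollary as stated carries \emph{no} assumption on $p/n$, so your proof establishes a strictly weaker statement. The paper instead applies Proposition~\ref{thm:pconc-const} with the ambient dimension replaced by $k$, that is, it runs the concentration argument only over unit vectors \emph{inside} $U$, treating $\Psi_{|U}$ as a $\sigma$-subgaussian measure on $\bbR^k$. Because $k$ is a fixed constant, $k/n\to 0$ and the threshold condition ``$\gamma\le\gamma_{\sigma,\epsilon}$'' is satisfied automatically, so no hypothesis on the ambient ratio $p/n$ is needed. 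This is not cosmetic: in the sequential matching-pursuit argument it is exactly what lets Corollary~\ref{thm:k-col} guarantee a good candidate direction unconditionally, with the $p/n$ requirement only entering later through Theorem~\ref{thm:altmain}. To repair your proof without changing the Corollary's hypotheses, replace the appeal to Theorem~\ref{thm:nongaus} by the restriction of Proposition~\ref{thm:pconc-const} and Proposition~\ref{lem:sampleconverge} to the $k$-dimensional subspace $U$, which yields the uniform bound
\[
\sup_{v\in U,\,\|v\|=1}\Big|\,d_{W_2}\Big(\tfrac{1}{n}\textstyle\sum_i\delta_{v^\top X_i},\,v\sharp\Psi\Big)-\bbE\,d_{W_2}\Big(\tfrac{1}{n}\textstyle\sum_i\delta_{v^\top X_i},\,v\sharp\Psi\Big)\Big| < \epsilon
\]
with high probability and $\bbE\, d_{W_2}\big(\tfrac{1}{n}\sum_i\delta_{v^\top X_i},\,v\sharp\Psi\big)\le C_\sigma n^{-1/4}$, and then conclude by the triangle inequality at your chosen $u$ exactly as you proposed.
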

\begin{proof}[Proof of Corollary~\ref{thm:k-col}]
    As $l < k$ and $\dim(W) = k$, by rank nullity theorem, there exists a unit vector $u$ in $U$ which is orthogonal to all the vectors $\{v_1, ..., v_l\}$. As $k$ is constant. $k \ll n$ thus we can invoke Proposition~\ref{thm:pconc-const} with $p=k$ to get that with high probability
    \[
    \sup_{v : v \in U \textit{ and } \norm{v} = 1} \abs{d_{W_2}\left(\frac{\Sigma_{i=1}^n \delta_{v^\top X_i}}{n}, v.\Psi_{\Phi}\right) - \bbE_{X_1, ..., X_n}\left[d_{W_2}\left(\frac{\Sigma_{i=1}^n \delta_{v^\top X_i}}{n}, v.\Psi_{\Phi}\right)\right]}  < \epsilon.
    \]
    In particular then
    \[
    \abs{d_{W_2}\left(\frac{\Sigma_{i=1}^n \delta_{u^\top X_i}}{n}, u\sharp\Psi_{\Phi}\right) - \bbE_{X_1, ..., X_n}\left[d_{W_2}\left(\frac{\Sigma_{i=1}^n \delta_{u^\top X_i}}{n}, u\sharp\Psi_{\Phi}\right)\right]}  < \epsilon.
    \]
    
    Finally similar to the proof of Theorem~\ref{thm:nongaus} invoking Proposition~\ref{lem:sampleconverge} gives us
    \[
    \bbE_{X_1, ..., X_n}\left[d_{W_2}\left(\frac{\Sigma_{i=1}^n \delta_{u^\top X_i}}{n}, u\sharp\Psi_{\Phi}\right)\right] \leq \frac{C_{\sigma}}{\sqrt[4]{n}}.
    \]
    
    Therefore we have
    \begin{align*}
        d_{W_2}\left(\frac{\Sigma_{i=1}^n \delta_{u^\top X_i}}{n}, \Phi\right) &\geq d_{W_2}\left(u\sharp\Psi,\Phi\right) -  d_{W_2}\left(\frac{\Sigma_{i=1}^n \delta_{u^\top X_i}}{n}, u\sharp\Psi\right)  \\
        &\geq d_{min}(\Psi_{|U},\Phi) - \epsilon - \bbE\left[d_{W_2}\left(\frac{\Sigma_{i=1}^n \delta_{u^\top X_i}}{n}, u\sharp\Psi\right)\right] \\
        &\geq d_{min}(\Psi_{|U},\Phi)  - \epsilon - \frac{C_{\sigma}}{\sqrt[4]{n}}
    \end{align*}
 \end{proof}
    
\section{Proof of Corollary~\ref{thm:choosing-k}} \label{app:cor_choosing-k}
We will need the following simple linear algebraic lemma.\begin{lemma} \label{lem:emptyint}
    Let $v_1, ..., v_k$ be a set of orthonormal vectors in a vector space $V$. Let $G$ be the subspace spanned by $v_1, ..., v_k$. Let $H$ be a subspace of $V$. Then $\forall g \in G$ and $h \in H$ such that $\norm{g} = \norm{h} = 1$, we have that \[\left(g^Th\right)^2 \leq \sum_{j=1}^k \norm{\textit{Proj}_{H}(v_j)}^2\] 
\end{lemma}

\begin{proof}
    We first remember from basic linear algebra that for any unit vectors $v \in V$ and $h \in H$, we have $\abs{v.h} \leq \abs{\textit{Proj}_{H}(v)}.$ Then we can write $g = \sum_{j=1}^k \alpha_j v_j$, where $\sum_j \alpha_j^2 = 1$ as $\norm{g} = 1$ and $v_1, \dots, v_k$ form an orthonormal basis of $G$. Combining these we get
    \begin{align*}
       (g^Th)^2 &= \left(\sum_{j=1}^k  \alpha_j v_j^Th\right)^2 \\
        &\leq \left(\sum_{j=1}^k  \alpha_i^2 \right)\left(\sum_{j=1}^k (v_j^Th)^2  \right) &\textit{by Cauchy-Schwarz} \\
        &\leq \sum_{j=1}^k \norm{\textit{Proj}_{H}(v_j)}^2 \\
    \end{align*}
\end{proof}

Armed with this we restate Corollary~\ref{thm:choosing-k} and prove it:
\begin{corollary*}
       Given  integers $m > k+1$, let $\delta$ be a positive real number such that $4\delta^2 < \frac{1}{m} \left(1 - \frac{d(\Psi_{|W}, \Phi)^2}{d(\Psi, \Phi)^2}\right).$ Let $X_1, ..., X_n$ be $n$ data points from $\Psi$. Let $n$,$p$ go to infinity in a way such that $p/n \rightarrow \gamma$ Given $\epsilon > 0$ there is a $\gamma_{\sigma,\epsilon}$, where $\gamma_{\sigma,\epsilon}$ is a constant depending on $\sigma$, $\epsilon$, such that if $\gamma \leq \gamma_{\sigma,\epsilon}$ then with high probability there \textbf{does not exist} a set of $k+1$ orthonormal unit vectors $v_1, ..., v_{k+1}$ such that 
    \[d_{W_2}\left(\frac{\Sigma_{i=1}^n \delta_{v_j^TX_i}}{n}, \Phi\right) \geq \sqrt{1-4\delta^2} d(\Psi,\Phi) + \epsilon + \frac{C_{\sigma}}{\sqrt[4]{n}}. \]    
\end{corollary*}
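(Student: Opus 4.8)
The plan is to argue by contradiction: suppose such orthonormal vectors $v_1, \dots, v_{k+1}$ exist. First I would invoke Theorem~\ref{thm:altmain} (applied to each $v_j$ with the same $\epsilon$) to conclude that, with high probability, each $v_j$ satisfies $\norm{\textit{Proj}_W(v_j)} \leq 2\delta\frac{d(\Psi,\Phi)}{\sqrt{d(\Psi,\Phi)^2 - d(\Psi_{|W},\Phi)^2}}$. Indeed, the hypothesis $d_{W_2}\big(\frac{1}{n}\sum_i \delta_{v_j^\top X_i}, \Phi\big) \geq \sqrt{1-4\delta^2}\,d(\Psi,\Phi) + \epsilon + \frac{C_\sigma}{\sqrt[4]{n}}$ is exactly the trigger condition of Theorem~\ref{thm:altmain} with $\delta$ replaced by $2\delta$, so the conclusion gives the stated bound on each $\norm{\textit{Proj}_W(v_j)}$.

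Next I would use the linear-algebraic Lemma~\ref{lem:emptyint} with $G = \textit{Span}\{v_1, \dots, v_{k+1}\}$ and $H = W^\perp = U$ (so $\dim G = k+1 > \dim U = k$). Since $G$ has dimension strictly larger than $U$, there must exist a unit vector $g \in G$ orthogonal to $U$, i.e. lying entirely in $W$; for this $g$ one has $\norm{\textit{Proj}_W(g)} = 1$. On the other hand, writing $g = \sum_j \alpha_j v_j$ with $\sum_j \alpha_j^2 = 1$, the triangle inequality (or the Cauchy–Schwarz computation in the style of Lemma~\ref{lem:emptyint}) gives $\norm{\textit{Proj}_W(g)}^2 \leq (k+1)\sum_j \norm{\textit{Proj}_W(v_j)}^2$, hence
\[
1 \leq (k+1)\cdot 4\delta^2 \frac{d(\Psi,\Phi)^2}{d(\Psi,\Phi)^2 - d(\Psi_{|W},\Phi)^2} \leq m \cdot 4\delta^2 \frac{d(\Psi,\Phi)^2}{d(\Psi,\Phi)^2 - d(\Psi_{|W},\Phi)^2},
\]
using $m > k+1$. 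But the hypothesis $4\delta^2 < \frac{1}{m}\big(1 - \frac{d(\Psi_{|W},\Phi)^2}{d(\Psi,\Phi)^2}\big) = \frac{1}{m}\cdot\frac{d(\Psi,\Phi)^2 - d(\Psi_{|W},\Phi)^2}{d(\Psi,\Phi)^2}$ makes the right-hand side strictly less than $1$, a contradiction. Hence no such set of $k+1$ orthonormal vectors can exist with high probability.

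The main subtlety — and the step I would be most careful about — is ensuring the high-probability event from Theorem~\ref{thm:altmain} applies \emph{simultaneously} to all candidate tuples $v_1, \dots, v_{k+1}$, not just to a fixed tuple. This is not an issue because Theorem~\ref{thm:altmain} is itself a uniform (over all $u \in S_{p-1}$) statement: the high-probability event there is a single event on which the concentration bound of Theorem~\ref{thm:nongaus} holds for every unit vector at once, so the implication "trigger condition $\Rightarrow$ projection bound" holds for every $v_j$ regardless of how the tuple is chosen. Thus conditioning on that one event suffices, and I would just need to note that $\gamma_{\sigma,\epsilon}$ can be taken to be the constant supplied by Theorem~\ref{thm:altmain}. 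The rest is the elementary linear algebra and the arithmetic comparison above.
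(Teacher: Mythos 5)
Your proof is correct and takes essentially the same route as the paper's: argue by contradiction, apply Theorem~\ref{thm:altmain} to each $v_j$, use a dimension count to produce a unit vector $g$ in $\textit{Span}\{v_1,\ldots,v_{k+1}\}\cap W$, and invoke the Cauchy--Schwarz bound of Lemma~\ref{lem:emptyint} to reach the arithmetic contradiction with $m>k+1$. One small slip in the prose: Cauchy--Schwarz gives $\norm{\textit{Proj}_W(g)}^2\le\sum_j\norm{\textit{Proj}_W(v_j)}^2$ with no extra factor of $(k+1)$; your displayed inequality is nonetheless consistent with the correct bound, since the $(k+1)$ appearing there simply counts the $k+1$ summands, each bounded by $4\delta^2\,d(\Psi,\Phi)^2/\bigl(d(\Psi,\Phi)^2 - d(\Psi_{|W},\Phi)^2\bigr)$.
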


\begin{proof}[Proof of Corollary~\ref{thm:choosing-k}]
 To prove this we will use Theorem~\ref{thm:altmain} along with the trivial linear algebraic Lemma \ref{lem:emptyint}. We prove by contradiction. Suppose a orthonormal set $v_1, ..., v_{k+1}$ exists satisfying the hypothesis :  
 \[
 d_{W_2}\left(\frac{\Sigma_{i=1}^n \delta_{v_{j}^TX_i}}{n}, \Phi\right) \geq \sqrt{1-4\delta^2} d(\Psi,\Phi) + \epsilon + \frac{C_{\sigma}}{\sqrt[4]{n}}. 
 \] 
 We can invoke Theoremm~\ref{thm:altmain} to get with high probability, \[\norm{\textit{Proj}_{W}(v_j)} \leq 2\delta \frac{d(\Psi,\Phi)}{\sqrt{d(\Psi, \Phi)^2 - d(\Psi_{|W}, \Phi)^2}}.\]
 Then let $G = \textit{Span}\{v_1, ..., v_{k+1}\}$. As As $\textit{dim}(G) +\textit{dim}(W) = \textit{dim}(V)+1$, 
there exists a non-zero vector $s \in G \cap W$ such that $\norm{s} = 1$.
 Invoking Lemma \ref{lem:emptyint} with $H = W$ and $g=h=s$, we get the contradiction 
 \[ 1 = \left(s^Ts\right)^2 \leq \sum_{j=1}^{k+1} \norm{\textit{Proj}_{H}(v_j)}^2 \leq \frac{4(k+1)\delta^2 d(\Psi,\Phi)^2}{d(\Psi, \Phi)^2 - d(\Psi_{|W}, \Phi)^2}  < 1. \]
\end{proof}

\section{Proof of Lemma~\ref{lem:w_2-conc}} \label{app:w_2-conc}
To prove Lemma~\ref{lem:w_2-conc}, we need a bound on the concentration function of subgaussian random variables. For a Borel set $A$, let $A^r$ denote the $r$-fattening of $A$: 
\[
    A_r = \{ x : d(x, A) < r\}.
\]
Let $\mu$ be a probability measure on $\R$. Let
\[
    \alpha_{\mu}(r) = \{1 - \mu(A^r) : \mu(A) \ge 1/2\}, r > 0,
\]
denote the concentration function of $\mu$. 
\begin{lemma}\label{lem:subg-conc-func}
Let $\mu$ be a $\sigma$-subgaussian probability measure. Then there exist absolute constants $C, c > 0$ such that $\alpha_{\mu}(r) \le C e^{-\frac{cr^2}{\sigma^2}}$ for all $r > 0$.
\end{lemma}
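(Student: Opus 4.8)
The plan is to derive the bound on the concentration function directly from the defining subgaussian tail estimate about the \emph{mean}, using only the triangle inequality and the elementary fact that any set of $\mu$-mass at least $\frac{1}{2}$ must lie within $O(\sigma)$ of the mean; in particular there is no need to pass through the median of $\mu$. Write $\mu_0 = \int x\,d\mu(x)$, so the hypothesis is $\mu\big(\{x:|x-\mu_0|\ge t\}\big)\le Ce^{-t^2/(2\sigma^2)}$ for all $t>0$, and fix an absolute constant $c_0$ — say $c_0 = \sqrt{2\log(4C)}$ — so that $Ce^{-c_0^2/2}\le\frac14$.

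First I would bound $d(\mu_0,A)$ uniformly over all admissible $A$. If $\mu(A)\ge\frac12$, then $A$ cannot be disjoint from the open interval $(\mu_0-c_0\sigma,\ \mu_0+c_0\sigma)$: that interval carries $\mu$-mass at least $\frac34$ by the choice of $c_0$, so a disjoint $A$ would force total mass at least $\frac54$. Hence $A$ contains a point within $c_0\sigma$ of $\mu_0$, i.e.\ $d(\mu_0,A)<c_0\sigma$. Next, by the triangle inequality for point-to-set distance, $d(x,A)\le|x-\mu_0|+d(\mu_0,A)<|x-\mu_0|+c_0\sigma$ for every $x$, so
\[
    (A^r)^c=\{x:d(x,A)\ge r\}\ \subseteq\ \{x:|x-\mu_0|\ge r-c_0\sigma\}.
\]
Taking $\mu$-measures and invoking the subgaussian tail gives $1-\mu(A^r)\le Ce^{-(r-c_0\sigma)^2/(2\sigma^2)}$ for all $r>c_0\sigma$, and — this is the point — the right-hand side no longer depends on $A$, so the supremum defining $\alpha_\mu(r)$ obeys the same bound.

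It remains to tidy up the exponent and the small-$r$ range. For $r\ge 2c_0\sigma$ we have $r-c_0\sigma\ge r/2$, hence $\alpha_\mu(r)\le Ce^{-r^2/(8\sigma^2)}$; for $0<r<2c_0\sigma$ the trivial bound $\alpha_\mu(r)\le1$ already yields $\alpha_\mu(r)\le C'e^{-r^2/(8\sigma^2)}$ once $C'$ is chosen large (an absolute choice, since there $r^2/\sigma^2<4c_0^2$). Absorbing constants gives $\alpha_\mu(r)\le C''e^{-c''r^2/\sigma^2}$ with $C'',c''>0$ absolute. I do not anticipate a real obstacle; the step needing the most care is the uniformity in $A$, which is exactly what the estimate $d(\mu_0,A)<c_0\sigma$ delivers, together with keeping the tail bound anchored at $\mu_0$ throughout, so that ``$\sigma$-subgaussian'' is used at full strength rather than inadvertently weakened.
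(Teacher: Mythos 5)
Your proof is correct and follows essentially the same route as the paper: both arguments first show that any set $A$ with $\mu(A)\ge\frac12$ must meet a fixed interval of length $O(\sigma)$ around the centre of the measure, and then convert the subgaussian tail bound into a lower bound on $\mu(A^r)$. The only cosmetic differences are that you anchor everything at the mean with an explicit constant $c_0=\sqrt{2\log(4C)}$ and phrase the key step as the inclusion $(A^r)^c\subseteq\{|x-\mu_0|\ge r-c_0\sigma\}$, whereas the paper (after implicitly reducing to the centred case) picks an unspecified $r_0$ with $\mu((-r_0,r_0)^c)<\frac12$ and phrases the same step as $(-r,r)\subseteq A^{r+r_0}$.
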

\begin{proof}
Without loss of generality, we may assume that $\sigma = 1$. Choose $r_0$ such that $\mu((-r_0, r_0)^c) < \frac{1}{2}$. Then any $A$ such that $\mu(A) > \frac{1}{2}$ must intersect $(-r_0, r_0)$, for otherwise one would get $\mu(A) \le \mu((-r_0, r_0)^c) < \frac{1}{2}$. Take $x_0 \in A \cap (-r_0, r_0)$. Then one must have
\[
    (-r, r) \subseteq x_0 + (-(r_0 + r), r_0 + r) \subseteq A^{r_0 + r}.
\]
Now, by subgaussianity, there exist constants $C_1, c_1 > 0$ such that $\mu((-r, r)^c) \le C_1 e^{-c_1 r^2}$ for all $r > 0$. Therefore
\[
    1 - \mu(A^{r + r_0}) \le \mu((-r, r)^c) \le C_1 e^{-c_1 r^2} \le C e^{-c(r + r_0)^2},
\]
where the last inequality holds for some constants $C, c > 0$ for all large enough $r$, say $r > r_1$. (For example, one can take $c = \frac{1}{2}c_1, C = C_1 e^{\frac{1}{2}B r_0^2}$ and $r_1 = 2r_0$.) Thus for all $r > r_0 + r_1$, we have that $\alpha_{\mu}(r) \le C e^{-cr^2}$.

We can always increase the constant $C$ so that one has $\sup_{r \in (0, r_0 + r_1]}\alpha_{\mu}(r) \le C e^{-c(r_0 + r_1)^2}$. Then for any $r \le r_0 + r_1$,
\[
    \alpha_{\mu}(r) \le \sup_{r \in (0, r_0 + r_1]}\alpha_{\mu}(r) \le Ce^{-c(r_0 + r_1)^2} \le Ce^{-cr^2}.
\]
We conclude that there exist absolute constants $C, c > 0$ such that $\alpha_{\mu}(r) \le Ce^{-cr^2}$ for all $r > 0$.
\end{proof}

For clarity's sake we restate Lemma~\ref{lem:w_2-conc}.
\begin{lemma*}
Let $\mu$ be a $\sigma$-subgaussian  measure. Let $\mu_n$ be the empirical measure formed from an i.i.d. sample of size $n$ from $\mu$. Then \[
    \bbP\big(\abs{d_{W_2}(\mu_n, \mu) - \bbE d_{W_2}(\mu_n, \mu)} \geq t \big) \leq Ce^{-\frac{cnt^2}{\sigma^2}}.
\]
for some absolute constants $C, c > 0$.
\end{lemma*}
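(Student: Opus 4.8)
The plan is to realise $d_{W_2}(\mu_n,\mu)$ as a Lipschitz function of the sample and then invoke a dimension-free Gaussian concentration inequality for the product measure $\mu^{\otimes n}$, mirroring the log-Sobolev argument behind Theorem~7.1 of \cite{bobkov2019one} but with the log-Sobolev inequality replaced by the weaker transportation--entropy ($T_1$) inequality that a subgaussian measure is guaranteed to satisfy. Write $F(x_1,\dots,x_n) := d_{W_2}\big(\frac1n\sum_{i=1}^n\delta_{x_i},\mu\big)$. The first step is the deterministic observation that $F$ is Lipschitz on $\bbR^n$ \emph{with respect to the Euclidean metric}, with Lipschitz constant $n^{-1/2}$: by the triangle inequality for $d_{W_2}$ and the coupling matching $x_i$ to $y_i$,
\[
    |F(x)-F(y)| \;\le\; d_{W_2}\!\Big(\tfrac1n\sum_{i=1}^n\delta_{x_i},\ \tfrac1n\sum_{i=1}^n\delta_{y_i}\Big) \;\le\; \Big(\tfrac1n\sum_{i=1}^n(x_i-y_i)^2\Big)^{\!1/2} \;=\; \tfrac1{\sqrt n}\,\norm{x-y}.
\]
It matters that the right-hand side involves $\norm{x-y}_2$ and not $\norm{x-y}_1$; this is precisely what will manufacture the factor $n$ in the exponent, and it is the reason why a naive bounded-differences/McDiarmid estimate (which would only see the coordinatewise Lipschitz bound) is not enough.

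The second step promotes Lemma~\ref{lem:subg-conc-func} to a transportation inequality. Integrating the Gaussian tail bound of Lemma~\ref{lem:subg-conc-func} shows that every $1$-Lipschitz $f\colon\bbR\to\bbR$ is subgaussian under $\mu$ with $\bbE_\mu e^{\lambda(f-\bbE_\mu f)}\le e^{C\sigma^2\lambda^2/2}$ for all $\lambda$ and an absolute constant $C$; by the Bobkov--G\"{o}tze duality this is equivalent to the $T_1$ inequality $d_{W_1}(\nu,\mu)\le\sqrt{2C\sigma^2\,\mathrm{KL}(\nu\,\|\,\mu)}$ for every probability measure $\nu$. The $T_1$ inequality tensorises with no dimensional loss in the $\ell^2$-sense (Marton's tensorisation), so $\mu^{\otimes n}$ satisfies $d_{W_1}(\nu,\mu^{\otimes n})\le\sqrt{2C\sigma^2\,\mathrm{KL}(\nu\,\|\,\mu^{\otimes n})}$, with $d_{W_1}$ now taken for the Euclidean metric on $\bbR^n$. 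Dualising once more, any $L$-Lipschitz (in the Euclidean metric) function $G$ on $\bbR^n$ obeys $\bbP(|G-\bbE G|\ge t)\le 2e^{-t^2/(2C\sigma^2L^2)}$.

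Combining the two steps with $G=F$ and $L=n^{-1/2}$ yields $\bbP\big(|d_{W_2}(\mu_n,\mu)-\bbE d_{W_2}(\mu_n,\mu)|\ge t\big)\le 2e^{-nt^2/(2C\sigma^2)}$, which is the assertion. I expect the only delicate point to be the appearance of the factor $n$: it hinges on the two facts highlighted above --- that $F$ is $n^{-1/2}$-Lipschitz in the \emph{Euclidean} (not $\ell^1$) metric, and that $T_1$ tensorises dimension-freely with the $\ell^2$-combined cost (unlike, say, a crude Hamming-type tensorisation of a bare concentration function, which would lose a factor of $n$). A minor bookkeeping point is to check, when passing from Lemma~\ref{lem:subg-conc-func} to $T_1$, that the resulting transportation constant scales like $\sigma^2$, so that the final exponent carries the stated $\sigma$-dependence.
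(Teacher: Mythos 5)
The key step in your argument---that the one-dimensional $T_1$ inequality for $\mu$ ``tensorises with no dimensional loss in the $\ell^2$-sense''---is not correct, and this is exactly where the proof breaks. Marton-type tensorisation of $T_1$ gives a transportation inequality for $\mu^{\otimes n}$ with the \emph{additive} ($\ell^1$-combined) cost and constant $nK$; since $\|\cdot\|_2\le\|\cdot\|_1$ this only yields $T_1(nK)$ on $(\bbR^n,\|\cdot\|_2)$, whose Bobkov--G\"{o}tze dual is $\bbP(|G-\bbE G|\ge t)\le 2e^{-t^2/(2nKL^2)}$ for $L$-Euclidean-Lipschitz $G$. Plugging in your (correct) $L=n^{-1/2}$, the two factors of $n$ cancel and the exponent is $n$-free. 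The dimension-free Gaussian concentration you need for Euclidean-Lipschitz functions of a product measure is, by Gozlan's theorem, \emph{equivalent} to a $T_2$ inequality for $\mu$, which is strictly stronger than $T_1$ and does not follow from subgaussianity, nor from the concentration-function bound of Lemma~\ref{lem:subg-conc-func}. You flagged this as the delicate point, but the fact you lean on there is false.

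The issue is not a fixable gap in the write-up: the lemma as stated fails for general subgaussian $\mu$. Take $\mu$ uniform on $\{-1,1\}$, which is subgaussian with $\sigma$ of order one. Both $\mu$ and $\mu_n$ are supported on two points, and one computes $d_{W_2}(\mu_n,\mu)=2\,|\hat p_n - 1/2|^{1/2}$ where $\hat p_n$ is the sample fraction of $-1$'s; thus $d_{W_2}(\mu_n,\mu)$ fluctuates on the scale $n^{-1/4}$ and its deviation probabilities decay like $\exp(-c\,n t^4)$ rather than $\exp(-c\,n t^2)$. At $t\asymp n^{-1/8}$ the true probability is of order $e^{-c\sqrt{n}}$ while the claimed bound is $e^{-c' n^{3/4}}$, a contradiction for large $n$. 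The same criticism applies to the paper's own one-line proof: Theorem~7.1 of \cite{bobkov2019one} genuinely uses log-Sobolev tensorisation together with the Herbst argument, and one cannot simply swap in the much weaker concentration-function/subgaussianity hypothesis. To make the statement correct one must either assume a $T_2$ or log-Sobolev inequality for $\mu$ (in which case your Lipschitz reduction is exactly the right first step), or weaken the conclusion accordingly.
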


\begin{proof}[Proof of Lemma~\ref{lem:w_2-conc}]
The proof is the same as the proof of Theorem 7.1 in \cite{bobkov2019one}, except that we replace their log-Sobolev assumption on $\mu$ with a subgaussianity assumption, which yields a stronger bound on the concentration function as in Lemma~\ref{lem:subg-conc-func}, which in turns gives us a tail bound of the form
\[
    \bbP\big(\abs{d_{W_2}(\mu_n, \mu) - \bbE d_{W_2}(\mu_n, \mu)} \geq t \big) \le Ce^{-\frac{cnt^2}{\sigma^2}}
\]
for some absolute constants $C, c > 0$.
\end{proof}
\end{document}